\documentclass[article]{article}
\usepackage{uai2018}
\usepackage{times}
\usepackage{hyperref}
\usepackage{color}
\usepackage{footnote}
\usepackage{url}
\usepackage{natbib}
\usepackage{booktabs}

\usepackage{placeins}
\usepackage{amsmath}
\usepackage{amsthm}
\usepackage{amssymb}
\usepackage{caption}
\usepackage{subcaption}
\usepackage{tikz}
\usepackage{comment}
\usepackage{bm}
\usepackage{array}
\usepackage{mathtools}
\usepackage{algorithm}
\usepackage{algorithmic}

\newtheorem{definition}{Definition}

\newtheorem{theorem}{Theorem}
\newtheorem{lemma}{Lemma}

\newcommand\numberthis{\refstepcounter{equation}\tag{\theequation}}

\newcommand{\bv}{\boldsymbol{b}}
\newcommand{\cv}{\boldsymbol{c}}

\newcommand{\pv}{\boldsymbol{p}}
\newcommand{\qv}{\boldsymbol{q}}
\newcommand{\nv}{\boldsymbol{n}}

\newcommand{\rv}{\boldsymbol{r}}
\newcommand{\sv}{\boldsymbol{s}}
\newcommand{\tv}{\boldsymbol{t}}
\newcommand{\uv}{\boldsymbol{u}}
\newcommand{\vv}{\boldsymbol{v}}

\newcommand{\xv}{\boldsymbol{x}}

\newcommand{\zv}{\boldsymbol{z}}

\newcommand{\Ev}[0]{{{\bf E}}}

\newcommand{\Dbc}[0]{{\mathcal{D}}}
\newcommand{\zerov}[0]{{{\bf 0}}}

\newcommand{\pt}{p_{\theta}}
\newcommand{\qt}{q_{\theta}}

\newcommand{\atoms}{\mathrm{atoms}}
\def\E{\mathbb{E}}
\newcommand{\Eb}{\mathbb{E}}
\newcommand{\gammav}{{\boldsymbol{\gamma}}}
\newcommand{\lambdav}{{\boldsymbol{\lambda}}}
\newcommand{\epsilonv}{{\boldsymbol{\epsilon}}}

\newcommand{\KL}{D_{\mathrm{KL}}}
\newcommand{\JS}{D_{\mathrm{JS}}}
\newcommand{\MMD}{D_{\mathrm{MMD}}}

\newcommand{\ELBO}{\mathcal{L}_{\mathrm{ELBO}}}

\newcommand{\D}{\mathcal{D}}

\newcommand{\bb}[1]{\mathbb{#1}}

\newcommand{\mc}[1]{\mathcal{#1}}

\newcommand{\rmodel}{{r_{\mathrm{model}}}}
\newcommand{\defeq}{\stackrel{\mathclap{\tiny\mbox{def}}}{=}}

\makesavenoteenv{tabular}
\makesavenoteenv{table}

\newcommand\blfootnote[1]{%
  \begingroup
  \renewcommand\thefootnote{}\footnote{#1}%
  \addtocounter{footnote}{-1}%
  \endgroup
}

\title{The Information Autoencoding Family: \\ A Lagrangian Perspective on Latent Variable Generative Models}
\author{ {\bf Shengjia Zhao} \\
Computer Science Department \\
Stanford University \\
sjzhao@stanford.edu \\
\And
{\bf Jiaming Song}  \\
Computer Science Department \\
Stanford University \\
tsong@stanford.edu \\
\And
{\bf Stefano Ermon}   \\
Computer Science Department \\
Stanford University \\
ermon@stanford.edu \\
}
\begin{document}

\maketitle

\begin{abstract}
A large number of objectives have been proposed to train latent variable generative models. We show that many of them are Lagrangian dual functions of the same primal optimization problem. The primal problem optimizes the mutual information between latent and visible variables, subject to the constraints of accurately modeling the data distribution and performing correct amortized inference. By choosing to maximize or minimize mutual information, and choosing different Lagrange multipliers, we obtain different objectives including InfoGAN, ALI/BiGAN, ALICE, CycleGAN, beta-VAE, adversarial autoencoders, AVB, AS-VAE and InfoVAE. Based on this observation, we provide an exhaustive characterization of the statistical and computational trade-offs made by all the training objectives in this class of Lagrangian duals. Next, we propose a dual optimization method where we optimize model parameters as well as the Lagrange multipliers. This method achieves Pareto optimal solutions in terms of optimizing information and satisfying the constraints.\blfootnote{To appear in Conference on Uncertainty in Artificial Intelligence (UAI 2018), Monterey, California, USA}
\end{abstract}

\section{INTRODUCTION}

Latent variable generative models
are designed to accomplish a wide variety of tasks in computer vision~\citep{deconvolutional_gan2015,kuleshov2017deep}, natural language processing~\citep{improved_vae_nlp2017}, reinforcement learning~\citep{infogail2017}, compressed sensing~\cite{dhar2018sparse},etc. 
Prominent examples include 
Variational Autoencoders (VAE, \citet{autoencoding_variational_bayes2013,variational_dbn_stochastic_bp2014}), with extensions such as $\beta$-VAE \citep{beta_vae2016}, Adversarial Autoencoders \citep{adversarial_autoencoder2015}, and InfoVAE \citep{zhao2017infovae}; 
Generative Adversarial Networks \citep{generative_adversarial_nets2014}, with extensions such as ALI/BiGAN \citep{adversarially_learned_inference2016,bigan2016}, InfoGAN \citep{infogan2016} and ALICE~\citep{alice2017}; hybrid objectives such as CycleGAN \citep{cyclegan2017}, DiscoGAN \citep{discogan2017}, AVB \citep{adversarial_variational_bayes2017} and AS-VAE \citep{pu2017adversarial}.
All these models attempt to fit an empirical data distribution, but differ in multiple ways: how they measure the similarity between distributions; whether or not they allow for efficient (amortized) inference; whether the latent variables should retain or discard information about the data;  and how the model is optimized, which can be likelihood-based or likelihood-free~\citep{mohamed2016learning,grover2018flow}. 


In this paper, we generalize existing training objectives for latent variable generative models. We show that all the above training objectives can be viewed as  \emph{Lagrangian dual functions} 
of a constrained optimization problem (primal problem). 
The primal problem optimizes over the parameters of a generative model and an (amortized) inference distribution. The optimization objective is to maximize or minimize mutual information between latent and observed variables;
the constraints (which we term ``\emph{consistency constraints}'') are to accurately model the data distribution and to perform correct amortized inference. 
By considering the Lagrangian dual function and different settings of the Lagrange multipliers, we can obtain all the aforementioned generative modeling training objectives. 
Surprisingly, under mild assumptions, the aforementioned objectives can be linearly combined to produce every possible primal objective/multipliers in this model family. 

In Lagrangian dual optimization, the dual function is maximized with respect to the Lagrange multipliers, and minimized with respect to the primal parameters. Under strong duality, the optimal parameters found by this procedure also solve the original primal problem. 
However, the aforementioned objectives use fixed (rather than maximized) multipliers. As a consequence, strong duality does not generally hold. 

To overcome this problem, we propose a new learning approach where the Lagrange multipliers are also optimized. We show that strong duality holds in distribution space, so this optimization procedure is guaranteed to optimize the primal objective while satisfying the consistency constraints. 
As an application of this approach, we propose \emph{Lagrangian VAE}, a Lagrangian optimization algorithm for the InfoVAE~\citep{zhao2017infovae} objective. Lagrangian VAE can explicitly trade-off optimization of the primal objective and consistency constraint satisfaction. 
In addition, both theoretical properties (of Lagrangian optimization) and empirical experiments show that solutions obtained by Lagrangian VAE \emph{Pareto dominate} solutions obtained with InfoVAE: Lagrangian VAE either obtains better mutual information or better constraint satisfaction, regardless of the hyper-parameters used by either method. 

\section{BACKGROUND}
\label{sec:background}

We consider two groups of variables: observed variables $\xv \in \mc{X}$ and latent variables $\zv \in \mc{Z}$. Our algorithm receives input distributions $q(\xv), p(\zv)$ over $\xv$ and $\zv$ respectively.
Each distribution is either specified \textit{explicitly} through a tractable analytical expression such as $\mathcal{N}(0, I)$, or \textit{implicitly} through a set of samples. 
For example, in latent variable generative modeling of images~\citep{autoencoding_variational_bayes2013,generative_adversarial_nets2014}, $\mc{X}$ is the space of images, and $\mc{Z}$ is the space of latent features. $q(\xv)$ is a dataset of sample images, and $p(\zv)$ is a simple ``prior'' distribution, e.g., a Gaussian; in unsupervised image translation~\citep{cyclegan2017}, $\mc{X}$ and $\mc{Z}$ are both image spaces and $q(\xv)$, $p(\zv)$ are sample images from two different domains (e.g., pictures of horses and zebras). 

The underlying joint distribution on $(\xv, \zv)$ is not known, and
we are not given any sample from it.
Our goal is to nonetheless learn some model of the joint distribution $\rmodel(\xv, \zv)$ with the following desiderata: 

\textbf{Desideratum 1. Matching Marginal} The marginals of $\rmodel(\xv, \zv)$ over $\xv, \zv$ respectively match the provided distributions $q(\xv), p(\zv)$.

\textbf{Desideratum 2. Meaningful Relationship} $\rmodel(\xv, \zv)$ captures a meaningful relationship between $\xv$ and $\zv$.
For example, in latent variable modeling of images, the latent variables $\zv$ should correspond to semantically meaningful features describing the image $\xv$. In unsupervised image translation, $r_{\mathrm{model}}(\xv, \zv)$ should capture the ``correct'' pairing between $\xv$ and $\zv$.  


We address desideratum 1 in this section, and desideratum 2 in section 3. The joint distribution $\rmodel(\xv, \zv)$ can be represented in factorized form by chain rule.
To do so, we define conditional distribution families $\lbrace p_{\theta^p}(\xv|\zv), \theta^p \in \Theta^p \rbrace$ and $\lbrace q_{\theta^q}(\zv|\xv), \theta^q \in \Theta^q \rbrace$. We require that for any $\zv$ we can both efficiently sample from $p_{\theta^p}(\xv|\zv)$ and compute $\log p_{\theta^p}(\xv|\zv)$, and similarly for $q_{\theta^q}(\zv|\xv)$. For compactness we use $\theta=(\theta^p,\theta^q)$ to denote the parameters of both distributions $p_\theta$ and $q_\theta$. We define the joint distribution $\rmodel(\xv, \zv)$ in two ways: 
\begin{align*}
\rmodel(\xv, \zv) \defeq p_\theta(\xv, \zv) \defeq p(\zv) p_\theta(\xv|\zv) \numberthis \label{eq:model_form1}
\end{align*}
and symmetrically 
\begin{align*}
\rmodel(\xv, \zv) \defeq q_\theta(\xv, \zv) \defeq q(\xv) q_\theta(\zv|\xv) \numberthis \label{eq:model_form2}
\end{align*}
Defining the model in two (redundant) ways seem unusual but has significant computational advantages: 
given $\xv$ we can tractably sample $\zv$,  
and vice versa. For example, in latent variable models, given observed data $\xv$ we can sample latent features from $\zv \sim q_\theta(\zv|\xv)$ (amortized inference), and given latent feature $\zv$ we can generate novel samples from $\xv \sim p_\theta(\xv|\zv)$ (ancestral sampling). 

If the two definitions (\ref{eq:model_form1}), (\ref{eq:model_form2}) are \textbf{consistent}, which we define as $p_\theta(\xv, \zv) = q_\theta(\xv, \zv)$, we automatically satisfy desideratum 1:
\begin{align*}
\rmodel(\xv) &= \int_{\zv} \rmodel(\xv, \zv) d\zv = \int_{\zv} q_\theta(\xv, \zv) d\zv = q(\xv) \\
\rmodel(\zv) &= \int_{\xv} \rmodel(\xv, \zv) d\xv = \int_{\xv} p_\theta(\xv, \zv) d\xv = p(\zv)
\end{align*}

Based on this observation, we can design objectives that encourage consistency. Many latent variable generative models fit into this framework. For example, variational autoencoders~(VAE,~\citet{autoencoding_variational_bayes2013}) enforce consistency by minimizing the KL divergence: 
\begin{align*}
\min_\theta \KL(q_\theta(\xv, \zv) \Vert p_\theta(\xv, \zv)) 
\end{align*}
This minimization is equivalent to maximizing the evidence lower bound ($\ELBO$)~\citep{autoencoding_variational_bayes2013}:
\begin{align*}
&\ \max_\theta -\KL(q_\theta(\xv, \zv) \Vert p_\theta(\xv, \zv)) \numberthis \label{eq:vae_dual_form} \\
& = -\Eb_{q_\theta(\xv, \zv)}\left[ \log (q_\theta(\zv|\xv) q(\xv)) - \log (p_\theta(\xv|\zv)p(\zv))\right] \\
& \begin{array}{l} =\bb{E}_{q_\theta(\xv, \zv)}[\log p_\theta(\xv|\zv)] + H_q(\xv) \\
\qquad - \Eb_{q(x)}\left[ \KL(q_\theta(\zv|\xv) \Vert p(\zv)) \right] \end{array} \\
& \left. \begin{array}{l} \equiv \bb{E}_{q_\theta(\xv, \zv)}[\log p_\theta(\xv|\zv)] \\
\qquad - \Eb_{q(x)}\left[ \KL(q_\theta(\zv|\xv) \Vert p(\zv)) \right] \end{array}  \right\rbrace \ELBO \numberthis \label{eq:vae_computable_form}
\end{align*}
where $H_q(\xv)$ is the entropy of $q(\xv)$ and is a constant that can be ignored for the purposes of optimization over model parameters $\theta$ (denoted $\equiv$). 

As another example, BiGAN/ALI \citep{bigan2016,ali2016} use an adversarial discriminator to approximately minimize the Jensen-Shannon divergence 
\[ \min_{\theta} \JS(q_\theta(\xv, \zv) \Vert p_\theta(\xv, \zv))  \]


Many other ways of enforcing consistency are possible. Most generally, we can enforce consistency with a vector of divergences
$\mc{D} = [D_1, \dots, D_m]$, where each $D_i$ takes two probability measures as input, and outputs a non-negative value which is zero if and only if the two input measures are the same.
Examples of possible divergences include Maximum Mean Discrepancy~(MMD,~\citet{mmd_statistics2007}), denoted $\MMD$; 
Wasserstein distance~\citep{wgan2017}, denoted 
$D_{\mathrm{W}}$; $f$-divergences~\citep{f_gan2016}, denoted $D_f$; and Jensen-Shannon divergence~\citep{generative_adversarial_nets2014}, denoted $\JS$.


Each $D_i$ can be any divergence applied to a pair of probability measures. The pair of probability measures can be defined over either both variables  $(\xv, \zv)$, a single variable $\xv$, $\zv$, or conditional $\xv \vert \zv$, $\zv \vert \xv$. If the probability measure is defined over a conditional  $\xv \vert \zv$, $\zv \vert \xv$, we also take expectation over the conditioning variable with respect to $p_\theta$ or $q_\theta$. Some examples of $D_i$ are:
\begin{align*}
\Eb_{q_\theta(z)} [ & D_{\mathrm{KL}}(q_\theta(x|z) \Vert p_\theta(x|z)) ] \\
& \MMD(q_\theta(\zv) \Vert p(\zv)) \\
& D_{\mathrm{W}}(p_\theta(\xv, \zv)\Vert q_\theta(\xv, \zv)) \\
\Eb_{q(x)}[ & D_f(q_\theta(\zv|\xv) \Vert p_\theta(\zv|\xv)) ] \\ 
& \JS(q(\xv) \Vert p_\theta(\xv))
\end{align*}
We only require that 
\[ D_i = 0, \forall i \in \{1, \dots, m\} \iff p_\theta(\xv, \zv) = q_\theta(\xv, \zv)\]
so $\mc{D} = {\bf 0}$ implies consistency. Note that each $D_i$ implicitly depends on the parameters $\theta$ through $p_\theta$ and $q_\theta$, but notationally we neglect this for simplicity. 


Enforcing consistency $p_\theta(\xv, \zv) = q_\theta(\xv, \zv)$ by $\mc{D} = \zerov$ satisfies desideratum 1 (matching marginal), but does not directly address desideratum 2 (meaningful relationship). A large number of joint distributions can have the same marginal distributions $p(\zv)$ and $q(\xv)$ (including ones where $\zv$ and $\xv$ are independent), and only a small fraction of them encode meaningful models. 

\section{GENERATIVE MODELING AS CONSTRAINT OPTIMIZATION}

To address desideratum 2, we 
modify the training objective and specify additional preferences among consistent $p_\theta(\xv, \zv)$ and $q_\theta(\xv, \zv)$.
Formally we solve the following primal optimization problem
\begin{gather}
\min_\theta  f(\theta) \quad \text{subject to }\mc{D} = \zerov \label{eq:primal}
\end{gather}
where $f(\theta)$ encodes our preferences over consistent distributions, and depends on $\theta$ through $p_\theta(\xv|\zv)$ and $q_\theta(\xv|\zv)$. 

An important preference is the mutual information between $\xv$ and $\zv$. Depending on the downstream application, we may maximize mutual information~\citep{lossy_vae2016,zhao2017infovae,alice2017,infogan2016} 
so that the features (latent variables) $\zv$ can capture as much  information as possible about $\xv$, or minimize mutual information~\citep{zhao2017infovae,beta_vae2016,information_bottleneck2015, shamir2010learning} to achieve compression. 
To implement mutual information preference we consider the following objective
\begin{align*}
f_I(\theta; \alpha_1, \alpha_2) = \alpha_1 I_{q_\theta}(\xv; \zv) + \alpha_2 I_{p_\theta}(\xv; \zv) \numberthis \label{eq:objective_information}
\end{align*}
where $I_{p_\theta}(\xv; \zv) = \bb{E}_{p_\theta(\xv, \zv)}[\log p_\theta(\xv, \zv) - \log \pt(\xv)p(\zv)]$ is the mutual information under $p_\theta(\xv, \zv)$, and $I_{q_\theta}(\xv; \zv)$ is their mutual information under $q_\theta(\xv, \zv)$. 

The optimization problem in Eq.(\ref{eq:primal}) with mutual information $f(\theta)$ in Eq.(\ref{eq:objective_information}) has the following Lagrangian dual function:
\begin{equation} 
\alpha_1 I_{q_\theta}(\xv; \zv) + \alpha_2 I_{p_\theta}(\xv; \zv)  + \lambdav^\top \mc{D} \label{eq:lag_dual}
\end{equation}
where $\lambdav = [\lambda_1, \dots, \lambda_m]$ is a vector of Lagrange multipliers, one for each of the $m$ consistency constraints in $\mc{D} = [D_1, \dots, D_m]$.


In the next section, we will show that many existing training objectives for generative models 
minimize the Lagrangian dual in Equation~\ref{eq:lag_dual} for some \emph{fixed} $\alpha_1, \alpha_2$, $\mc{D}$ and $\lambdav$. 
However, dual optimization requires maximization over the dual parameters $\lambdav$, which should \emph{not} be kept fixed. We discuss dual optimization in Section 5. 


\section{GENERALIZING OBJECTIVES WITH FIXED MULTIPLIERS}


\begin{savenotes}
\begin{table*}[t]
\centering
\begin{minipage}{\linewidth}
\begin{tabular}{c|c|c|c}
\hline
$f(p, q)$ & Likelihood Based & Unary Likelihood Free & Binary Likelihood Free \\ 
\hline
0 & VAE~\citep{autoencoding_variational_bayes2013} & VAE-GAN~\citep{adversarial_autoencoder2015} & ALI~\citep{ali2016} \\
$ \alpha_1 I_q $ & $\beta$-VAE~\citep{beta_vae2016} & InfoVAE~\citep{zhao2017infovae}  & ALICE~\citep{alice2017} \\ 
$ \alpha_2 I_p $ & VMI~\citep{barber2003algorithm} & InfoGAN~\citep{infogan2016} & - \\ 
$ \alpha_1 I_q + \alpha_2 I_p$ & - & CycleGAN~\citep{cyclegan2017} & AS-VAE \citep{pu2017adversarial}  \\
\hline
\end{tabular}
\end{minipage}
\caption{For each choice of $\alpha$ and computability class (Definition \ref{def:tractable_families}) we list the corresponding existing model. Several other objectives are also Lagrangian duals, but they are not listed because they are similar to models in the table. These objectives include DiscoGAN \citep{discogan2017}, BiGAN \citep{bigan2016}, AAE \citep{adversarial_autoencoder2015}, WAE \citep{tolstikhin2017wasserstein}. 
}
\label{tab:params}
\end{table*}
\end{savenotes}

Several existing objectives for latent variable generative models can be rewritten in the dual form of Equation~\ref{eq:lag_dual} with \emph{fixed} Lagrange multipliers. We provide several examples here and provide more in Appendix A.  

\paragraph{VAE~\citep{autoencoding_variational_bayes2013}}
Per our discussion in Section~\ref{sec:background}, the VAE training objective commonly written as ELBO maximization in Eq.(\ref{eq:vae_computable_form}) is actually equivalent to Equation~\ref{eq:vae_dual_form}. This is a dual form where we set $\Dbc = [ \KL(q_\theta(\xv, \zv)\Vert p_\theta(\xv, \zv) ]$, $\alpha_1 = \alpha_2 = 0$ and $\lambdav = 1$. 
Because $\alpha_1 = \alpha_2 = 0$, this objective has no information preference, confirming previous observations that the learned distribution can have high, low or zero mutual information between $x$ and $z$.~\cite{lossy_vae2016,zhao2017infovae}.



\paragraph{$\beta$-VAE~\citep{beta_vae2016}} The following objective $\mc{L}_{\beta-\mathrm{VAE}}$ is proposed to learn disentangled features $\zv$:
\begin{align*}
-\bb{E}_{q_\theta(\xv, \zv)}[\log p_\theta(\xv|\zv)] + \beta \bb{E}_{q(\xv)} \left[ \KL(q_\theta(\zv|\xv) \Vert p(\zv)) \right]
\end{align*}
This is equivalent to the following dual form:
\begin{align*}
&\mc{L}_{\beta-\mathrm{VAE}} \\ 
& \equiv 
\bb{E}_{q_\theta(\xv, \zv)}\left[ \log \frac{q_\theta(\xv| \zv)q(\xv)}{p_\theta(\xv|\zv)q_\theta(\xv|\zv)} + \beta \log \frac{q_\theta(\zv|\xv)q_\theta(\zv)}{q_\theta(\zv)p(\zv)} \right] \\
& \begin{array}{ll} \equiv (\beta - 1) I_{q_\theta}(\xv; \zv)  & \mathrm{(primal)} \\
  \quad + \beta \KL(q_\theta(\zv) \Vert p(\zv))) & \mathrm{(consistency)} \\
  \quad + \bb{E}_{q_\theta(\zv)}[\KL(q_\theta(\xv|\zv) \Vert p_\theta(\xv|\zv))] & \end{array}
\end{align*}
where we use $\equiv$ to denote ``equal up to a value that does not depend on $\theta$''. In this case, 
\begin{align*} 
& \alpha_1 = \beta - 1 , \qquad \alpha_2 = 0 \\
& \lambdav = [\beta, 1] \\ 
& \mc{D} = [KL(q_\theta(\zv) \Vert p(\zv))), \bb{E}_{q_\theta(\zv)}[\KL(q_\theta(\xv|\zv) \Vert p_\theta(\xv|\zv))]
\end{align*}
When $\alpha_1 > 0$ or equivalently $\beta > 1$, there is an incentive to minimize mutual information between $\xv$ and $\zv$.

\paragraph{InfoGAN~\citep{infogan2016}} As another example, the InfoGAN objective~\footnote{For conciseness we use $\zv$ to denote structured latent variables, which is represented as $\cv$ in~\citep{infogan2016}.} 
:
\begin{align*}
\mc{L}_{\mathrm{InfoGAN}} = \JS(q(\xv) \Vert p_\theta(\xv)) - \bb{E}_{p_\theta(\xv, \zv)}[\log q_\theta(\zv |\xv)] 
\end{align*}
is equivalent to the following dual form: 
\begin{align*}
& \mc{L}_{\mathrm{InfoGAN}}  \equiv \bb{E}_{p_\theta(\xv, \zv)}[- \log p_\theta(\zv|\xv) + \log p(\zv) \\
& + \log p_\theta(\zv|x) - \log q_\theta(\zv|\xv)] + \JS(q(\xv) \Vert p_\theta(\xv))  \\
& \begin{array}{ll}
\equiv - I_{p_\theta}(\xv; \zv) & \mathrm{(primal)} \\
 \quad + \bb{E}_{p_\theta(\xv)}[\KL(p_\theta(\zv|\xv) \Vert q_\theta(\zv|\xv))]  & \mathrm{(consistency)} \\
 \quad + \JS(q(\xv) \Vert p_\theta(\xv)) & \end{array}
\end{align*}
In this case $\alpha_1 = 0$, and $\alpha_2 = -1 < 0$, the model maximizes mutual information between $\xv$ and $\zv$. 

In fact, all objectives in Table~\ref{tab:params} belong to this class\footnote{Variational Mutual Information Maximization (VMI) is not truly a Lagrangian dual because it does not enforce consistency constraints ($\lambdav=0$). 
}. 
Derivations for additional models can be found in Appendix A.  


\subsection{ENUMERATION OF ALL OBJECTIVES}
The Lagrangian dual form of an objective reveals its mutual information preference ($\alpha_1, \alpha_2$), type of consistency constraints ($\mc{D}$), and weighting of the constraints ($\lambdav$). This suggests that the Lagrangian dual perspective may unify many existing training objectives. We wish to identify and categorize \emph{all} objectives that have Lagrangian dual form as in Eq.\ref{eq:lag_dual}). However, this has two technical difficulties that we proceed to resolve.

\paragraph{1. Equivalence:} Many objectives appear different, but are actually identical for the purposes of optimization (as we have shown). To handle this we characterize ``equivalent objectives'' with a set of pre-specified transformations.  

\begin{definition} 
\label{def:equivalence}
\textbf{Equivalence (Informal):} An objective $\mathcal{L}$ is equivalent to $\mathcal{L}'$ when there exists a constant $C$, so that for all parameters $\theta$, $\mathcal{L}(\theta) = \mathcal{L}'(\theta) + C$. We denote this as $\mathcal{L} \equiv \mathcal{L}'$. 

$\mc{L}$ and $\mc{L}'$ are elementary equivalent if $\mc{L}'$ can be obtained from $\mc{L}$ by applying chain rule or Bayes rule to probabilities in $\mc{L}$, and addition/subtraction of constants $\E_{q(\xv)}[\log q(\xv)]$ and $\E_{p(\zv)}[\log p(\zv)]$. 
\end{definition}
A more formal but verbose definition is in Appendix B, Definition 1. 

Elementary equivalences define simple yet flexible transformations for deriving equivalent objectives. For example, all the transformations in Section 4 (VAE, $\beta$-VAE and InfoGAN) and Appendix A are elementary. This implies that all objectives in Table~\ref{tab:params} are elementary equivalent to a Lagrangian dual function in Eq.(\ref{eq:lag_dual}) . However, these transformations are not exhaustive. For example, 
tranforming $\bb{E}_{p_\theta}[g(\xv)]$ into $\bb{E}_{q_\theta}[g(\xv) p_\theta(\xv) / q_\theta(\xv)]$ via importance sampling is not accounted for, hence the two objectives are not considered to be elementary equivalent. 

\textbf{2. Optimization Difficulty}: Some objectives are easier to evaluate/optimize than others. For example, variational autoencoder training is robust and stable, adversarial training is less stable and requires careful hyper-parameter selection~\citep{kodali2018convergence}, and direct optimization of the log-likelihood $\log p_\theta(\xv)$ is very difficult for latent variable models and almost never used~\cite{grover2018flow}. 

To assign a ``hardness score'' to each objective, we first group the ``terms'' (an objective is a sum of terms) from easy to hard to optimize. An objective belongs to a ``hardness class'' if it cannot be transformed into an objective with easier terms. This is formalized below: 

\begin{definition}
\label{def:tractable_families}
\textbf{Effective Optimization:} We define

1. Likelihood-based terms as the following set
\begin{align*} 
T_1 = \lbrace &\E_{p_\theta(\xv, \zv)}[\log p_\theta(\xv|\zv)],\E_{p_\theta(\xv, \zv)}[\log p_\theta(\xv, \zv)], \\ 
& \E_{p_\theta(\zv)}[\log p(\zv)],\E_{p_\theta(\xv, \zv)}[\log q_\theta(\zv|\xv)] \\  
& \E_{q_\theta(\xv, \zv)}[\log p_\theta(\xv|\zv)],\E_{q_\theta(\xv, \zv)}[\log p_\theta(\xv, \zv)], \\
& \E_{q_\theta(\zv)}[\log p(\zv)], \E_{q_\theta(\xv, \zv)}[\log q_\theta(\zv|\xv)] \rbrace
\end{align*}

2. Unary likelihood-free terms as the following set
\begin{align*} 
T_2 = \lbrace D(q(\xv)\Vert p_\theta(\xv)), D(q_\theta(\zv)\Vert p(\zv)) \rbrace
\end{align*}

3. Binary likelihood-free terms as the  following set
\begin{align*}
T_3 = \lbrace D(q_\theta(\xv, \zv)\Vert p_\theta(\xv, \zv)) \rbrace
\end{align*}

where each $D$ can be $f$-divergence, Jensen Shannon divergence, Wasserstein distance, or Maximum Mean Discrepancy.
An objective $\mathcal{L}$ is likelihood-based computable if $\mathcal{L}$ is elementary equivalent to some $\mathcal{L}'$ that is a linear combination of elements in $T_1$; unary likelihood-free computable if $\mathcal{L}'$ is a linear combination of elements in $T_1 \cup T_2$; binary likelihood-free computable if $\mathcal{L}'$ is a linear combination of elements in $T_1 \cup T_2 \cup T_3$.
\end{definition}

The rationale of this categorization is that elements in $T_1$ can be estimated by Monte-Carlo estimators and optimized by stochastic gradient descent effectively in practice (with low bias and variance)~\citep{autoencoding_variational_bayes2013,variational_dbn_stochastic_bp2014}. 
In contrast, elements in $T_2$ are optimized by likelihood-free approaches such as adversarial training~\citep{generative_adversarial_nets2014} or kernelized methods such as MMD~\citep{mmd_statistics2007} or Stein variational gradient~\citep{stein_variational2016}. These optimization procedures are known to suffer from stability problems~\citep{wgan2017} or cannot handle complex distributions in high dimensions~\citep{ramdas2015decreasing}. Finally, elements in $T_3$ are over both $\xv$ and $\zv$, and they are empirically shown to be even more difficult to optimize~\citep{alice2017}. We do not include terms such as $\Eb_{q(x)} [\log p_\theta(x)]$ because they are seldom feasible to compute or optimize for latent variable generative models. 

Now we are able to fully characterize all Lagrangian dual objectives in Eq.(~\ref{eq:lag_dual}) that are likelihood-based / unary likelihood free / binary likelihood free computable in Table~\ref{tab:params}. 


In addition, Table~\ref{tab:params} contains essentially \emph{all} possible models for each optimization difficulty class in Definition~\ref{def:tractable_families}. This is shown in the following theorem (informal, formal version and proof in Appendix B, Theorem 3,4,5)

\begin{theorem}
\textbf{Closure theorem (Informal)}: Denote a Lagrangian objectives in the form of Equation~\ref{eq:lag_dual} where all divergences are $\KL$ a KL Lagrangian objective. Under elementary equivalence defined in Definition~\ref{def:equivalence}, 

\textbf{1)} Any KL Lagrangian objective that is elementary equivalent to a likelihood based computable objective is equivalent to a linear combination of VMI and $\beta$-VAE.

\textbf{2)} Any KL Lagrangian objective that is elementary equivalent to a unary likelihood computable objective is equivalent to a linear combination of InfoVAE and InfoGAN.

\textbf{3)} Any KL Lagrangian objective that is elementary equivalent to a binary likelihood computable objective is equivalent to a linear combination of ALICE, InfoVAE and InfoGAN. 
\end{theorem}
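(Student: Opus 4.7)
My plan is to treat the collection of KL Lagrangian objectives as a real vector space, quotient it by the elementary-equivalence relation of Definition~\ref{def:equivalence}, and then verify by dimension counting that the listed model objectives (VMI, $\beta$-VAE, InfoVAE, InfoGAN, ALICE) span the intersection of this quotient with each computability class.

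I would first enumerate the atomic KL divergences that can appear in $\mc{D}$: the marginal, conditional and joint KL's between $p_\theta$ and $q_\theta$ over subsets of $\{\xv,\zv\}$, e.g.\ $\KL(q(\xv)\Vert p_\theta(\xv))$, $\KL(q_\theta(\zv)\Vert p(\zv))$, $\E_{q_\theta(\zv)}[\KL(q_\theta(\xv\mid\zv)\Vert p_\theta(\xv\mid\zv))]$, and their $p\leftrightarrow q$ swaps. Using the chain rule for KL, plus the identity $I_{q_\theta}(\xv;\zv)=\E_{q_\theta}[\log q_\theta(\zv\mid\xv)]-\E_{q_\theta(\zv)}[\log q_\theta(\zv)]$ and its $p_\theta$ analogue, I can decompose every mutual information and every joint KL into a sum of ``atomic'' functionals drawn from a fixed finite list $\{A_j\}$. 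Every KL Lagrangian objective then admits a canonical expansion $\sum_j c_j(\alpha,\lambdav) A_j$.

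Next I would partition the atoms $A_j$ according to which of $T_1$, $T_2$, $T_3$ they lie in after elementary rewriting, and count dimensions modulo the additive constants $H_q(\xv)$ and $H_p(\zv)$. I expect the likelihood-based bucket to have two free dimensions (matching $(\alpha_1,\alpha_2)$ for VMI and $\beta$-VAE), the unary bucket to gain two more corresponding to the marginal atoms $\KL(q(\xv)\Vert p_\theta(\xv))$ and $\KL(q_\theta(\zv)\Vert p(\zv))$ (matched by InfoGAN and InfoVAE), and the binary bucket to gain a single further dimension corresponding to $\KL(q_\theta(\xv,\zv)\Vert p_\theta(\xv,\zv))$ (matched by ALICE). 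For each reference objective I would write out its canonical atom-decomposition --- the expansions of $\beta$-VAE and InfoGAN in Section~4 already serve as templates --- collect these decompositions into a coefficient matrix, and verify full rank on its bucket. Invertibility then yields an explicit recipe for writing any KL Lagrangian objective in the class as a linear combination of the stated models.

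The main obstacle I anticipate is the normal-form/linear-independence claim underlying the dimension count: I must show that if two linear combinations of atoms are elementary equivalent, then their coefficients on each non-constant atom coincide. I would establish this by exhibiting, for each atom $A_j$, a one-parameter deformation of $(p_\theta,q_\theta)$ along which only $A_j$ varies to leading order, ruling out hidden relations; the permitted elementary moves (chain rule, Bayes rule, and dropping the constants $H_q(\xv)$ and $H_p(\zv)$) then correspond exactly to the relations already baked into the atomic list. A subtler point is that $I_{p_\theta}(\xv;\zv)$ naturally expands to include $\E_{p_\theta(\xv)}[\log p_\theta(\xv)]$, which lies in none of $T_1\cup T_2\cup T_3$; so whenever $\alpha_2\neq 0$ the multipliers $\lambdav$ must be tuned to cancel this illegal atom via companion terms like $\E_{p_\theta(\xv,\zv)}[\log q_\theta(\zv\mid\xv)]$ (as happens in VMI and InfoGAN). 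This cancellation constraint is precisely what forces the spanning sets to be the ones listed, and verifying it case-by-case across the three computability tiers will be the technical heart of the argument.
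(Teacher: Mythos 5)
Your proposal follows essentially the same route as the paper's Appendix~B proof: both map every objective into a formal vector space spanned by expectation-of-log ``atoms,'' encode the elementary equivalences and the computability tiers $T_1, T_2, T_3$ as subspaces, and conclude via a rank/dimension computation that the named model families span the required intersections (the paper verifies dimensions $13$, $17$, $18$ for the three tiers). The only caveat is that your guessed dimension increments are off --- the unary tier gains four marginal-KL directions rather than two, and each named family contributes one dimension per free multiplier, so the quotient dimensions are $3$, $7$, $8$ rather than $2$, $4$, $5$ --- but since your method is to verify spanning by an explicit rank computation, these guesses would be corrected automatically when the computation is carried out.
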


We also argue in the Appendix (without formal proof) that this theorem holds for other divergences including $\MMD$, $D_{\mathrm{W}}$, $D_{f}$ or $D_{\mathrm{JS}}$. 

Intuitively, this suggests a rather negative result: if a new latent variable model training objective 
contains mutual information preference and consistency constraints (defined through $\KL$, $\MMD$, $D_{\mathrm{W}}$, $D_{f}$ or $D_{\mathrm{JS}}$), and this objective 
can be effectively optimized as in Definition~\ref{def:equivalence} and Definition~\ref{def:tractable_families}, then this objective is a linear combination of existing objectives. Our limitation is that we are restricted to elementary transformations and the set of terms defined in Definition~\ref{def:tractable_families}. To derive new training objectives, we should consider new transformations, non-linear combinations and/or new terms. 






\section{DUAL OPTIMIZATION FOR LATENT VARIABLE GENRATIVE MODELS}
While existing objectives for latent variable generative models have dual form in Equation~\ref{eq:lag_dual}, they are not solving the dual problem exactly because the Lagrange multipliers $\lambdav$ are predetermined instead of optimized.
In particular, if we can show strong duality, the optimal solution to the dual is also an optimal solution to the primal~\citep{boyd2004convex}. However if the Lagrange multipliers are fixed, this property is lost, and the parameters $\theta$ obtained via dual optimization may be suboptimal for $\min_\theta f(\theta)$, or violate the consistency conditions $\mathcal{D} = \zerov$. 

\subsection{RELAXATION OF CONSISTENCY CONSTRAINTS}

This observation motivates us to directly solve the dual optimization problem where we also \emph{optimize} the Lagrange multipliers. 
\[ \max_{\lambdav \geq 0} \min_\theta f(\theta) + \lambdav^T \mc{D} \] 
Unfortunately, this is usually impractical because the consistency constrains are difficult to satisfy when the model has finite capacity, so in practice the primal optimization problem is actually infeasible and $\lambdav$ will be optimized to $+\infty$.

One approach to this problem is to use relaxed consistency constraints, where compared to Eq.(\ref{eq:primal}) we require consistency up to some error $\epsilonv > 0$: 
\begin{align}
\min_\theta f(\theta) \quad \text{subject to} \quad \mc{D} \leq \epsilonv \label{eq:soft-primal}
\end{align}
For a sufficiently large $\epsilonv$, the problem is feasible. 
This has the corresponding dual problem:
\begin{align}
\max_{\lambdav \geq 0} \min_\theta f(\theta) + \lambdav^\top (\mc{D} - \epsilonv) \label{eq:eps-dual}
\end{align}

When $\lambdav$ is constant (instead of maximized), Equation~\ref{eq:eps-dual} still reduces to existing latent variable generative modeling objectives since $\lambdav^\top \epsilonv$ is a constant, so the objective simply becomes 
\[ \min_\theta f(\theta) + \lambdav^T \mc{D} + \text{constant} \] 

In contrast, we propose to find $\lambdav^*, \theta^*$ that optimize the Lagrangian dual in Eq.(\ref{eq:eps-dual}). If we additionally have strong duality, $\theta^*$ is also the optimal solution to the primal problem in Eq.(\ref{eq:soft-primal}). 

\subsection{STRONG DUALITY WITH MUTUAL INFORMATION OBJECTIVES}
This section aims to show that strong duality for Eq.(\ref{eq:soft-primal}) holds in distribution space if we replace mutual informations in $f$ with upper and lower bounds. 
We prove this via Slater's condition~\citep{boyd2004convex}, which has three requirements: 1. $\forall D \in \mathcal{D}$, $D$ is convex in $\theta$; 2. $f(\theta)$ is convex for $\theta \in \Theta$;  3. the problem is strictly feasible: $\exists \theta$ s.t. $\mathcal{D} < \epsilonv$. We propose weak conditions to satisfy all three in distribution space, so strong duality is guaranteed. 

For simplicity we focus on discrete $\mathcal{X}$ and $\mathcal{Z}$. We parameterize $q_\theta(\zv \vert \xv)$ with a parameter matrix $\theta^q \in \mathbb{R}^{|\mathcal{X}||\mathcal{Z}|}$ (we add the superscript $q$ to distinguish parameters of $q_\theta$ from that of $p_\theta$) where
\begin{align*}
q_{\theta}(\zv=j \vert \xv=i) = \theta^q_{ij}, \forall i \in \mathcal{X}, j \in \mathcal{Z}  \numberthis \label{eq:theta_notation}
\end{align*}
The only restriction is that $\theta^q$ must correspond to valid conditional distributions. More formally, we require that $\theta^q \in \Theta^q$, where
\begin{align*}
\Theta^q = \left\lbrace \theta^q \in \mathbb{R}^{|\mathcal{X}||\mathcal{Z}|} \ s.t. \ 0 \leq \theta^q_{ij} \leq 1, \sum_j \theta^q_{ij} = 1 \right\rbrace \numberthis \label{eq:theta_set}
\end{align*}
Similarly we can define $\theta^p \in \Theta^p$ for $p_\theta$. We still use 
\begin{align*}
\theta = [\theta^q, \theta^p], \quad \Theta = \Theta^q \times \Theta^p \numberthis \label{eq:theta_both}
\end{align*}
to denote both sets of parameters.

\textbf{1) Constraints $D \in \mathcal{D}$ are convex}: 
We show that some divergences used in existing models are convex in distribution space.
\begin{lemma}[Convex Constraints (Informal)]
\label{lemma:convex_constraint}
$\KL$, $\MMD$, or $D_f$ over any marginal distributions on $x$ or $z$ or joint distributions on $(x, z)$ are convex with respect to $\theta \in \Theta$ as defined in Eq.(\ref{eq:theta_both}).
\end{lemma}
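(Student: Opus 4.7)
The plan is to reduce the lemma to two standard facts: (i) each of the relevant distributions is an affine function of $\theta$, and (ii) each of the divergences in question is jointly convex in its two distributional arguments. Once both are in hand, convexity in $\theta$ follows because the composition of a (jointly) convex function with an affine map is convex, and the domain $\Theta$ defined in Eq.(\ref{eq:theta_set})--(\ref{eq:theta_both}) is convex as a product of simplex-like sets cut out by linear (in)equalities.

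First I would spell out the linearity. From the parametrization in Eq.(\ref{eq:theta_notation}), the joint $q_\theta(x,z) = q(x)\,\theta^q_{xz}$ is linear in $\theta^q$ (the input $q(x)$ does not depend on $\theta$), and marginalization gives $q_\theta(z) = \sum_x q(x)\,\theta^q_{xz}$, still linear in $\theta^q$. Symmetrically, $p_\theta(x,z) = p(z)\,\theta^p_{zx}$ and $p_\theta(x) = \sum_z p(z)\,\theta^p_{zx}$ are linear in $\theta^p$. Hence the pair $(q_\theta(\cdot),\, p_\theta(\cdot))$ of any of the marginal or joint distributions covered by the lemma is an affine function of $\theta = (\theta^q, \theta^p) \in \Theta$.

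Next I would invoke joint convexity for each divergence family. The KL divergence $(P,Q) \mapsto \KL(P\Vert Q)$ is jointly convex in $(P,Q)$ by the log-sum inequality. More generally, for any convex $f$ with $f(1)=0$, the $f$-divergence $D_f(P\Vert Q) = \sum_x Q(x) f(P(x)/Q(x))$ is jointly convex, since the integrand $(s,t)\mapsto t\,f(s/t)$ is the perspective of $f$ and perspective preserves convexity. For MMD with a PSD kernel, one writes $\MMD(P,Q) = \Vert \mu_P - \mu_Q\Vert_{\mathcal{H}}$ using the RKHS mean embedding $\mu_P = \E_{X\sim P}[k(X,\cdot)]$; the map $P \mapsto \mu_P$ is linear, so $\MMD$ is the Hilbert norm of an affine function of $(P,Q)$ and is therefore convex, and $\MMD^2$ likewise as the squared norm of an affine map.

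Finally I would compose: plugging the affine map $\theta \mapsto (P_\theta, Q_\theta)$ into any of these jointly convex functionals yields a convex function of $\theta$ on the convex set $\Theta$. The main obstacle is really just bookkeeping rather than mathematical depth: enumerating the allowed $(P,Q)$ pairs (joint against joint, $q_\theta(z)$ against $p(z)$, $q(x)$ against $p_\theta(x)$, etc.) and checking in each case that both arguments are affine in $\theta$. I would also flag what the informal lemma quietly excludes, namely expectations over $\theta$-dependent conditionals such as $\E_{q_\theta(z)}[\KL(q_\theta(x|z)\Vert p_\theta(x|z))]$: these couple a $\theta$-dependent sampling distribution with a conditional divergence and are not convex by this argument, which is consistent with the lemma statement restricting attention to marginals and joints.
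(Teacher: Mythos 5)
Your proposal is correct and follows essentially the same route as the paper's proof: establish that the relevant marginals and joints are affine in $\theta$, invoke joint convexity of the divergence in its two distributional arguments, and conclude by composition of a convex function with an affine map. You supply slightly more justification than the paper does for the joint convexity of $D_f$ (via the perspective construction) and of $\MMD$ (via the linearity of the mean embedding), where the paper simply cites a reference and asserts the generalization, but the underlying argument is identical.
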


Therefore if one only uses these convex divergences, the first requirement for Slater's condition is satisfied. 

\textbf{2) Convex Bounds for $f(\theta)$}:
$f(\theta) = \alpha_1 I_{q_\theta}(x; z) + \alpha_2 I_{p_\theta}(x; z)$ is not itself guaranteed to be convex in general. 
However we observe that mutual information has a convex upper bound, and a concave lower bound, which we denote as $\overline{I}_{q_\theta}$ and $\underline{I}_{q_\theta}$ respectively:

\begin{align*}
& I_{q_\theta}(\xv; \zv) \numberthis \label{eq:bounds} \\
&\begin{array}{ll}
=\bb{E}_{q(\xv)}[\KL(q_\theta(\zv|\xv)\Vert p(\zv))] & \small{\text{convex upper bound } \overline{I}_{q_\theta}} \\
 \quad - \KL(q_\theta(\zv) \Vert p(\zv)) & \small{\text{bound gap } \overline{I}_{q_\theta} - I_{q_\theta}}
\end{array}
\\
&\begin{array}{ll}
= \bb{E}_{q_\theta(\xv, \zv)}[\log p_\theta(\xv|\zv)] + H_q(\xv) & \small{\text{concave lower bound } \underline{I}_{q_\theta}}  \\ 
\quad + \Eb_{p(\zv)} \KL(q(\xv|\zv) \Vert p_\theta(\xv|\zv))  & \small{\text{bound gap } I_{q_\theta}- \underline{I}_{q_\theta} }
 \end{array} 
\end{align*}
The convexity/concavity of these bounds is shown by the following lemma, which we prove in the appendix
\begin{lemma}[Convex/Concave Bounds]
\label{lemma:convex_bound}
$\overline{I}_{q_\theta}$ is convex with respect to $\theta \in \Theta$ as defined in Eq.(\ref{eq:theta_both}), and $\underline{I}_{q_\theta}$ is concave with respect to $\theta \in \Theta$.
\end{lemma}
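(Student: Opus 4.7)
My plan is to substitute the matrix parameterization of Eq.~(\ref{eq:theta_notation}) into each bound, reducing it to a non-negative combination of scalar convex or concave primitives, after which preservation of convexity under non-negative combination and affine substitution finishes the argument.

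For the upper bound, substituting $q_\theta(\zv=j\mid\xv=i)=\theta^q_{ij}$ gives
\[
\overline{I}_{q_\theta}=\sum_{i}q(\xv=i)\sum_{j}\theta^q_{ij}\log\frac{\theta^q_{ij}}{p(\zv=j)}.
\]
The scalar $\phi(t)=t\log(t/p(\zv=j))$ has $\phi''(t)=1/t>0$, so $\phi$ is convex on $(0,1]$. The full expression is a non-negative combination of univariate convex functions of disjoint coordinates of $\theta^q$, hence convex in $\theta^q$; independence from $\theta^p$ then extends convexity to the full product space $\Theta=\Theta^q\times\Theta^p$.

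For the lower bound, writing $\theta^p_{ji}=p_\theta(\xv=i\mid\zv=j)$ yields
\[
\underline{I}_{q_\theta}=\sum_{i,j}q(\xv=i)\,\theta^q_{ij}\log\theta^p_{ji}+H_q(\xv),
\]
in which $H_q(\xv)$ is an additive constant. For fixed $\theta^q$ each summand is a non-negative multiple of the concave scalar $\log\theta^p_{ji}$, so the sum is concave in $\theta^p$; for fixed $\theta^p$ it is affine, hence concave, in $\theta^q$.

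The principal obstacle is upgrading these coordinate-wise statements to joint concavity in $\theta=(\theta^q,\theta^p)$, since the bilinear scalar $(u,v)\mapsto u\log v$ has indefinite Hessian on a box. My strategy for closing the gap is to invoke the perspective identity $u\log v=-u\log(u/v)+u\log u$: the first piece is jointly concave by the classical joint convexity of $(u,v)\mapsto u\log(u/v)$ on the positive quadrant, while the aggregated residual $\sum_{i,j}q(\xv=i)\,\theta^q_{ij}\log\theta^q_{ij}=-H_{q_\theta}(\zv\mid\xv)$ must be absorbed using the simplex constraints that define $\Theta$. Verifying that this residual does not destroy joint concavity over $\Theta$ is the step I expect to require the most care, and is where the argument will have to exploit more structure than pure coordinate-wise reasoning.
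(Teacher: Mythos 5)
Your treatment of the upper bound is correct and is essentially the paper's own argument: the paper invokes convexity of $\KL(q_\theta(\zv|\xv)\Vert p(\zv))$ in its first argument together with linearity of $\theta^q \mapsto q_\theta(\cdot|\xv=i)$ and the fact that non-negative combinations preserve convexity; your coordinate-wise reduction to $t\mapsto t\log(t/p(\zv=j))$ is the same computation written out explicitly.

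For the lower bound, the obstacle you flag is real, and the repair you sketch cannot close it. In the decomposition $u\log v=-u\log(u/v)+u\log u$, the residual aggregates to $\sum_{i,j}q(\xv=i)\,\theta^q_{ij}\log\theta^q_{ij}$, a non-negative combination of the \emph{convex} functions $t\mapsto t\log t$; adding a convex term to a jointly concave one proves nothing, and the simplex constraints do not absorb it. In fact joint concavity genuinely fails: take $|\mathcal{X}|=|\mathcal{Z}|=2$ with $q$ uniform, hold the second row of $\theta^q$ and of $\theta^p$ at $(1/2,1/2)$, and write $a=\theta^q_{11}$, $c=\theta^p_{11}$. Then $\underline{I}_{q_\theta}$ restricted to this feasible two-dimensional slice equals $\tfrac{a}{2}\log c+\tfrac14\log(1-c)$ plus an affine function, whose Hessian has zero $(a,a)$-entry and off-diagonal entry $1/(2c)$, hence determinant $-1/(4c^2)<0$ and a direction of strict convexity. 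This is the standard fact that the cross-entropy $\sum_i q_i\log p_i$ is concave in $p$ for fixed $q$ and affine in $q$ for fixed $p$, but not jointly concave. Note that the paper's own proof does not resolve this either: it declares $\Eb_{q_\theta(\xv,\zv)}[\log p_\theta(\xv|\zv)]$ ``linear in $q_\theta(\xv,\zv)$'' while silently treating $\log p_\theta(\xv|\zv)$ as fixed, so it establishes only linearity in $\theta^q$ (and, separately, concavity in $\theta^p$), i.e.\ block-wise rather than joint concavity on $\Theta=\Theta^q\times\Theta^p$. So your instinct that more structure is needed was correct, but the structure is not there; the honest conclusion is that the concavity claim must be weakened to block-wise concavity, with the corresponding adjustment to how $\underline{I}_{q_\theta}$ is used in the strong-duality argument.
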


A desirable property of these bounds is that if we look at the bound gaps (difference between bound and true value) in Eq.(\ref{eq:bounds}), they are $0$ if the consistency constraint is satisfied (i.e., $p_\theta(\xv, \zv) = q_\theta(\xv, \zv)$). They will be tight (bound gaps are small) when consistency constraints are approximately satisfied (i.e., $p_\theta(\xv, \zv) \approx q_\theta(\xv, \zv)$). In addition we also denote identical bounds for $I_{p_\theta}$ as $\overline{I}_{p_\theta}$ and $\underline{I}_{p_\theta}$ Similar bounds for mutual information have been discussed in~\citep{alemi2017fixing}.

\textbf{3) Strict Feasibility}: the optimization problem has non empty feasible set, which we show in the following lemma:
\begin{lemma}[Strict Feasibility]
\label{lemma:non_empty_interior}
For discrete $\mathcal{X}$ and $\mathcal{Z}$, and $\epsilonv > 0$, $\exists \theta \in \Theta$ such that $\mathcal{D} < \epsilonv$. 
\end{lemma}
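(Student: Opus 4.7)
The plan is to exhibit an explicit $\theta\in\Theta$ making the two joint distributions coincide pointwise, $p_\theta(\xv,\zv)=q_\theta(\xv,\zv)$. By the defining property of the consistency divergences recalled in Section~\ref{sec:background}, namely $D_i=0 \iff p_\theta(\xv,\zv)=q_\theta(\xv,\zv)$, every component of $\mathcal{D}$ is then zero, so for any $\epsilonv>0$ (componentwise) we immediately obtain $\mathcal{D}=\zerov<\epsilonv$.

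The construction I have in mind is to let the encoder ignore $\xv$ and the decoder ignore $\zv$:
$$q_\theta(\zv=j\mid\xv=i)=p(\zv=j),\qquad p_\theta(\xv=i\mid\zv=j)=q(\xv=i),$$
for all $i\in\mathcal{X},\,j\in\mathcal{Z}$. In the matrix notation of Eqs.~(\ref{eq:theta_notation})--(\ref{eq:theta_set}), this amounts to taking every row of $\theta^q$ equal to the probability vector $p(\zv)$ and every row of $\theta^p$ equal to $q(\xv)$. Each such row is a nonnegative vector summing to $1$, so $\theta^q\in\Theta^q$, $\theta^p\in\Theta^p$, and hence $\theta\in\Theta$.

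Plugging this $\theta$ into the definitions in Eqs.~(\ref{eq:model_form1})--(\ref{eq:model_form2}) gives
$$q_\theta(\xv,\zv)=q(\xv)\,p(\zv)=p(\zv)\,q(\xv)=p_\theta(\xv,\zv),$$
so the two joints coincide and every $D_i\in\mathcal{D}$ vanishes. There is essentially no obstacle here; the only point that deserves a sentence of care is that some entries of $\mathcal{D}$ may be expected conditional divergences such as $\Eb_{q_\theta(\zv)}[\KL(q_\theta(\xv\mid\zv)\Vert p_\theta(\xv\mid\zv))]$, but equality of the two joints forces equality of the corresponding conditionals wherever the common marginal is positive, so those expectations also vanish. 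Hence $\theta$ strictly satisfies all consistency constraints, which establishes the lemma.
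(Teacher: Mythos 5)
Your proposal is correct and uses exactly the same construction as the paper's proof: take $q_\theta(\zv\mid\xv)=p(\zv)$ and $p_\theta(\xv\mid\zv)=q(\xv)$ so that both joints equal the product $q(\xv)p(\zv)$, forcing $\mathcal{D}=\zerov<\epsilonv$. Your additional checks (that the rows lie in $\Theta^q,\Theta^p$ and that expected conditional divergences also vanish) are fine elaborations of the same argument.
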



Therefore we have shown that for convex/concave upper and lower bounds on $f$, all three of Slater's conditions are satisfied, so strong duality holds. 
We summarize this in the following theorem. 
\begin{theorem}[Strong Duality]
\label{thm:strong_duality}
If $\mc{D}$ contains only divergences in Lemma~\ref{lemma:convex_constraint}, then for all $\epsilon > 0$:

If $\alpha_1, \alpha_2 \geq 0$ strong duality holds for the following problems: 
\begin{equation}
\min_{\theta \in \Theta} \alpha_1 \overline{I}_{q_\theta} + \alpha_2 \overline{I}_{p_\theta} \quad \text{subject to} \quad \mc{D} \leq \epsilon \label{eq:primal-upper}
\end{equation}

If $\alpha_1, \alpha_2 \leq 0$, strong duality holds for the following problem
\begin{equation}
\min_{\theta \in \Theta} \alpha_1 \underline{I}_{q_\theta} + \alpha_2 \underline{I}_{p_\theta} \quad \text{subject to} \quad \mc{D} \leq \epsilon \label{eq:primal-lower}
\end{equation}
\end{theorem}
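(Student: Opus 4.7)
The plan is to derive the theorem as a direct application of Slater's condition, whose three hypotheses (convex feasible set/constraints, convex objective, strict feasibility) have been prepared by Lemmas \ref{lemma:convex_constraint}, \ref{lemma:convex_bound}, and \ref{lemma:non_empty_interior} respectively. First I would note that the parameter set $\Theta = \Theta^q \times \Theta^p$ is convex, since each factor, as defined in Eq.(\ref{eq:theta_set}), is a Cartesian product of probability simplices. Convexity of the feasible region then follows from Lemma~\ref{lemma:convex_constraint}: each component of $\mc{D}$ is convex in $\theta \in \Theta$, so $\{\theta \in \Theta : \mc{D} \leq \epsilon\}$ is the intersection of $\Theta$ with sublevel sets of convex functions, hence convex.

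Next I would handle the objective in each of the two cases. For problem \eqref{eq:primal-upper}, Lemma~\ref{lemma:convex_bound} gives that $\overline{I}_{q_\theta}$ is convex on $\Theta$; the same argument applied symmetrically in $p_\theta$ gives convexity of $\overline{I}_{p_\theta}$. Since $\alpha_1, \alpha_2 \geq 0$, the objective $\alpha_1 \overline{I}_{q_\theta} + \alpha_2 \overline{I}_{p_\theta}$ is a nonnegative combination of convex functions and therefore convex. For problem \eqref{eq:primal-lower}, Lemma~\ref{lemma:convex_bound} gives concavity of $\underline{I}_{q_\theta}$ (and symmetrically of $\underline{I}_{p_\theta}$), so with $\alpha_1, \alpha_2 \leq 0$ the objective is again a nonnegative combination of convex functions (each $\alpha_i \underline{I}$ being $|\alpha_i|$ times a convex function). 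In both cases we therefore have a convex program over a convex domain with convex inequality constraints.

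Strict feasibility is handed to us by Lemma~\ref{lemma:non_empty_interior}: there exists $\theta \in \Theta$ with every component of $\mc{D}$ strictly less than the corresponding component of $\epsilonv$. Slater's condition for convex programs with inequality constraints (Boyd and Vandenberghe, §5.2.3) then guarantees zero duality gap and attainment of the dual optimum, giving strong duality for \eqref{eq:primal-upper} and \eqref{eq:primal-lower}. The main conceptual obstacle has already been absorbed into the preceding lemmas, since convexity of mutual information itself is false in general but was sidestepped by replacing $I$ with its convex/concave variational surrogates $\overline{I}$ and $\underline{I}$; the sign assumptions on $\alpha_1, \alpha_2$ are exactly what is needed to line the direction of the surrogate up with the direction of optimization so that the resulting objective is convex. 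What remains is essentially bookkeeping: checking that the symmetric statements for $\overline{I}_{p_\theta}$ and $\underline{I}_{p_\theta}$ follow by swapping the roles of $p$ and $q$ in the proof of Lemma~\ref{lemma:convex_bound}, and that the vector inequality $\mc{D} < \epsilonv$ in Lemma~\ref{lemma:non_empty_interior} is precisely the form of Slater point required.
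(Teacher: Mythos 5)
Your proposal is correct and follows essentially the same route as the paper: the authors also prove Theorem~\ref{thm:strong_duality} by verifying the three hypotheses of Slater's condition via Lemmas~\ref{lemma:convex_constraint}, \ref{lemma:convex_bound}, and \ref{lemma:non_empty_interior}, using the sign assumptions on $\alpha_1,\alpha_2$ to ensure the surrogate objective is convex. Your additional remarks (convexity of $\Theta$, the symmetric argument for $\overline{I}_{p_\theta}$ and $\underline{I}_{p_\theta}$) are details the paper leaves implicit but do not change the argument.
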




\subsection{DUAL OPTIMIZATION}
Because the problem is convex in distribution space and satisfies Slater's condition, the $\theta^*$ that achieves the saddle point
\begin{align*}
\lambdav^\star, \theta^\star = {\arg\max}_{\lambdav \geq 0} {\arg\min}_\theta f(\theta) + \lambdav^T (\mc{D} - \epsilonv) \numberthis \label{eq:saddle_point}
\end{align*}
is also a solution to the original optimization problem Eq.(\ref{eq:soft-primal})~\citep{boyd2004convex}(Chapter 5.4). In addition the max-min problem Eq.(\ref{eq:saddle_point}) is convex with respect to $\theta$ and concave (linear) with respect to $\lambdav$, so one can apply iterative gradient descent/ascent over $\theta$ (minimize) and $\lambdav$ (maximize) and achieve stable convergence to saddle point~\citep{holding2014convergence}. 
We describe the iterative algorithm in Algorithm~\ref{alg:dual}.

\begin{algorithm}[t]
   \caption{Dual Optimization for Latent Variable Generative Models}
   \label{alg:dual}
\begin{algorithmic}
   \STATE {\bfseries Input:} Analytical form for $p(\zv)$ and samples from $q(\xv)$; constraints $\mc{D}$; $\alpha_1, \alpha_2$ that specify maximization / minimization of mutual information; $\epsilonv > 0$ which specifies the strength of constraints; step size $\rho_\theta$, $\rho_\lambda$ for $\theta$ and $\lambdav$.
   
   \STATE {\bfseries Output:} $\theta$ (parameters for $p_\theta(\xv|\zv)$ and $q_\theta(\zv|\xv)$).
   
   \hspace{-1em}\hrulefill
   
   \STATE Initialize $\theta$ randomly
   \STATE Initialize the Lagrange multipliers $\lambdav := 1$
   \IF{$\alpha_1, \alpha_2 > 0$}
   \STATE $f(\theta) \leftarrow \alpha_1 \overline{I}_{q_\theta} + \alpha_2 \overline{I}_{p_\theta}$
   
   \ELSE
   \STATE $f(\theta) \leftarrow \alpha_1 \underline{I}_{q_\theta} + \alpha_2 \underline{I}_{p_\theta}$
   
   \ENDIF
   \FOR{$t=0, 1, 2, \ldots $}
   \STATE $\theta \leftarrow \theta - \rho_\theta (\nabla_\theta f(\theta) + \lambdav^\top \nabla_\theta \mc{D})$
   \STATE $\lambdav \leftarrow \lambdav + \rho_\lambda (\mc{D} - \epsilonv)$
   \ENDFOR
\end{algorithmic}
\end{algorithm}

In practice, we do not optimize over distribution space and $\lbrace p_\theta(\xv|\zv) \rbrace, \lbrace q_\theta(\zv|\xv) \rbrace$ are some highly complex and non-convex families of functions. 
We show in the experimental section that this scheme is stable and effective despite non-convexity. 

\section{LAGRANGIAN VAE}
In this section we consider a particular instantiation of the general dual problem proposed in the previous section. Consider the following primal problem, with $\alpha_1 \in \bb{R}$:
\begin{align}
 \ \min_\theta & \ \alpha_1 I_{q_\theta}(\xv; \zv) \label{eq:lagvae-primal} \\
\text{subject to } & \ \KL(q_\theta(\xv, \zv) \Vert p_\theta(\xv, \zv))) \leq \epsilon_1 \nonumber \\
& \ \MMD(q_\theta(\zv)\Vert p(\zv)) \leq \epsilon_2 \nonumber
\end{align}





For mutual information minimization / maximization, we respectively replace the (possibly non-convex) mutual information by upper bound $\overline{I}_{q_\theta}$ if $\alpha_1 \geq 0$ and lower bound $\underline{I}_{q_\theta}$ if $\alpha_1 < 0$. The corresponding dual optimization problem can be written as:
\begin{align*}
\max_{\lambdav \geq 0} \min_\theta  \left\lbrace \begin{array}{ll} 
 \alpha_1 \overline{I}_{q_\theta} + \lambdav^\top (\mc{D}_{\mathrm{InfoVAE}} - \epsilonv), & \alpha_1 \geq 0 \\
 \alpha_1 \underline{I}_{q_\theta} + \lambdav^\top (\mc{D}_{\mathrm{InfoVAE}} - \epsilonv), & \alpha_1 < 0 
 \end{array} \right. \numberthis  \label{eq:lagvae}
\end{align*}
where $\epsilonv = [\epsilon_1, \epsilon_2]$,  $\lambdav = [\lambda_1, \lambda_2]$ and 
\begin{align*}
\mc{D}_{\mathrm{InfoVAE}} = [&\KL(q_\theta(\xv, \zv) \Vert p_\theta(\xv, \zv))), \\
&\MMD(q_\theta(\zv)\Vert p(\zv))]
\end{align*}
We call the objective in~\ref{eq:lagvae} Lagrangian (Info)VAE (LagVAE). Note that setting a constant $\lambdav$ for the dual function recovers the InfoVAE objective~\citep{zhao2017infovae}. 
By Theorem~\ref{thm:strong_duality} strong duality holds for this problem and finding the max-min saddle point of LagVAE in Eq.(\ref{eq:lagvae}) is identical to finding the optimal solution to original problem of Eq.(\ref{eq:lagvae-primal}). 

The final issue is choosing the $\epsilonv$ hyper-parameters so that the constraints are feasible. This is non-trivial since selecting $\epsilonv$ that describe feasible constraints 
depends on the task and model structure. 
We introduce a general strategy that is effective in all of our experiments. First we learn a parameter $\theta^*$ that satisfies the consistency constraints ``as well as possible'' without considering mutual information maximization/minimization. Formally this is achieved by the following optimization (for any choice of $\lambdav > 0$), 
\begin{align*}
\theta^* = \arg\min_\theta \quad \lambdav^T  \mc{D}_{\mathrm{InfoVAE}} \numberthis \label{eq:find_feasible}
\end{align*}
This is the original training objective for InfoVAE with $\alpha_1 = 0$ and can be optimized by 
\begin{align*}
 \min_\theta & \lambdav^T  \mc{D}_{\mathrm{InfoVAE}} \\
= & \lambda_1 \KL(q_\theta(\xv, \zv) \Vert p_\theta(\xv, \zv))) + \lambda_2 \MMD(q_\theta(\zv)\Vert p(\zv)) \\
\equiv & \lambda_1 \mathcal{L}_\mathrm{ELBO}(\theta) + \lambda_2 \MMD(q_\theta(\zv)\Vert p(\zv)) \numberthis \label{eq:infovae_without_mi}
\end{align*}
where $\mathcal{L}_\mathrm{ELBO}(\theta)$ is the evidence lower bound defined in Eq.(\ref{eq:vae_computable_form}). Because we only need a rough estimate of how well consistency constraints can be satisfied, the selection of weighing $\lambda_1$ and $\lambda_2$ is unimportant. The recommendation in \citep{zhao2017infovae} works well in all our experiments ($\lambda_1=1, \lambda_2=100$). 

Now we introduce a ``slack'' to specify how much we are willing to tolerate consistency error to achieve higher/lower mutual information. Formally, we define $\hat{\epsilonv}$ as the divergences $\mc{D}_{\mathrm{InfoVAE}}$ evaluated at the above $\theta^*$. Under this $\hat{\epsilonv}$ the following constraint must be feasible (because $\theta^*$ is a solution):
\[ \mc{D}_{\mathrm{InfoVAE}} \leq \hat{\epsilonv}  \]
Now we can safely set $\epsilonv = \gammav + \hat{\epsilonv}$, where $\gammav > 0$, and the constraint 
\[ \mc{D}_{\mathrm{InfoVAE}} \leq \epsilonv  \]
must still be feasible (and strictly feasible). $\gammav$ has a very nice intuitive interpretation: it is the ``slack'' that we are willing to accept. Compared to tuning $\alpha_1$ and $\lambdav$ for InfoVAE, tuning $\gammav$ is much more interpretable: we can anticipate the final consistency error before training.

Another practical consideration is that the one of the constraints $\KL(q_\theta(\xv, \zv) \Vert p_\theta(\xv, \zv))$ is difficult to estimate. However, we have
\begin{align}
\KL(q_\theta(\xv, \zv) \Vert p_\theta(\xv, \zv)) = -\mathcal{L}_\mathrm{ELBO} - H_q(\xv) \nonumber
\end{align}
where $\mathcal{L}_\mathrm{ELBO}$ is again, the evidence lower bound in Eq.(\ref{eq:vae_computable_form}) of Section 2, and $H_q(\xv)$ is the entropy of the true distribution $q(x)$. $\mathcal{L}_\mathrm{ELBO}$ is empirically easy to estimate, and $H_q(\xv)$ is a constant irrelevant to the optimization problem. The optimization problem is identical if we replacing the more difficult constraint $\KL(q_\theta(\xv, \zv) \Vert p_\theta(\xv, \zv)) \leq \epsilon_1$ with the easier-to-optimize/estimate constraint $-\mathcal{L}_\mathrm{ELBO} \leq \epsilon_1'$ (where $\epsilon_1' = \epsilon_1 + H_q(\xv)$). In addition, $\epsilon_1'$ can be selected by the technique in the previous paragraph. 

\section{EXPERIMENTS}
We compare the performance of \textbf{LagVAE}, where we learn $\lambdav$ automatically, and \textbf{InfoVAE}, where we set $\lambdav$ in advance (as hyperparameters).
Our primal problem is to find solutions that maximize / minimize mutual information under the consistency constraints. Therefore, we consider two performance metrics:
\begin{itemize}
\item $I_q(\xv, \zv)$ the mutual information between $\xv$ and $\zv$. We can estimate the mutual information via the identity: 
\begin{align*}
I_q(\xv; \zv) = \bb{E}_{q_\theta(\xv, \zv)}\left[ \log q_\theta(\zv|\xv) - \log q_\theta(\zv) \right] \numberthis \label{eq:mi_estimation}
\end{align*}
where we approximate $q_\theta(\zv)$ with a kernel density estimator.
\item the consistency divergences $\KL(q_\theta(\xv, \zv) \Vert p_\theta(\xv, \zv))$ and 
$\MMD(q_\theta(\zv)\Vert p(\zv))$. As stated in Section 6, we replace $\KL(q_\theta(\xv, \zv) \Vert p_\theta(\xv, \zv))$ with the evidence lower bound $\mc{L}_{\mathrm{ELBO}}$.  
\end{itemize}

\begin{figure*}
\centering
\begin{tabular}{ccc}
\includegraphics[width=0.35\textwidth]{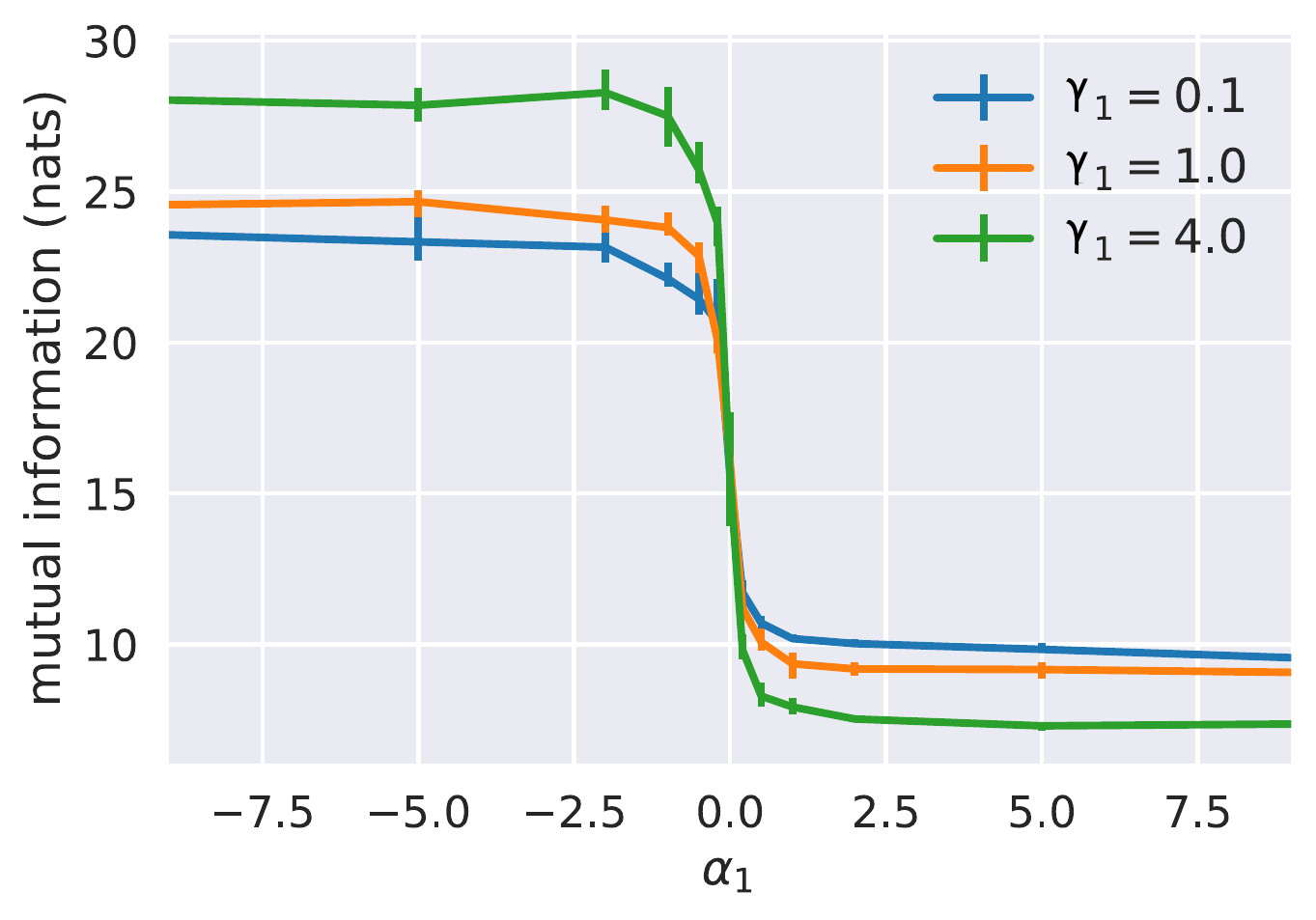} &
\hspace{5em} &
\includegraphics[width=0.30\textwidth]{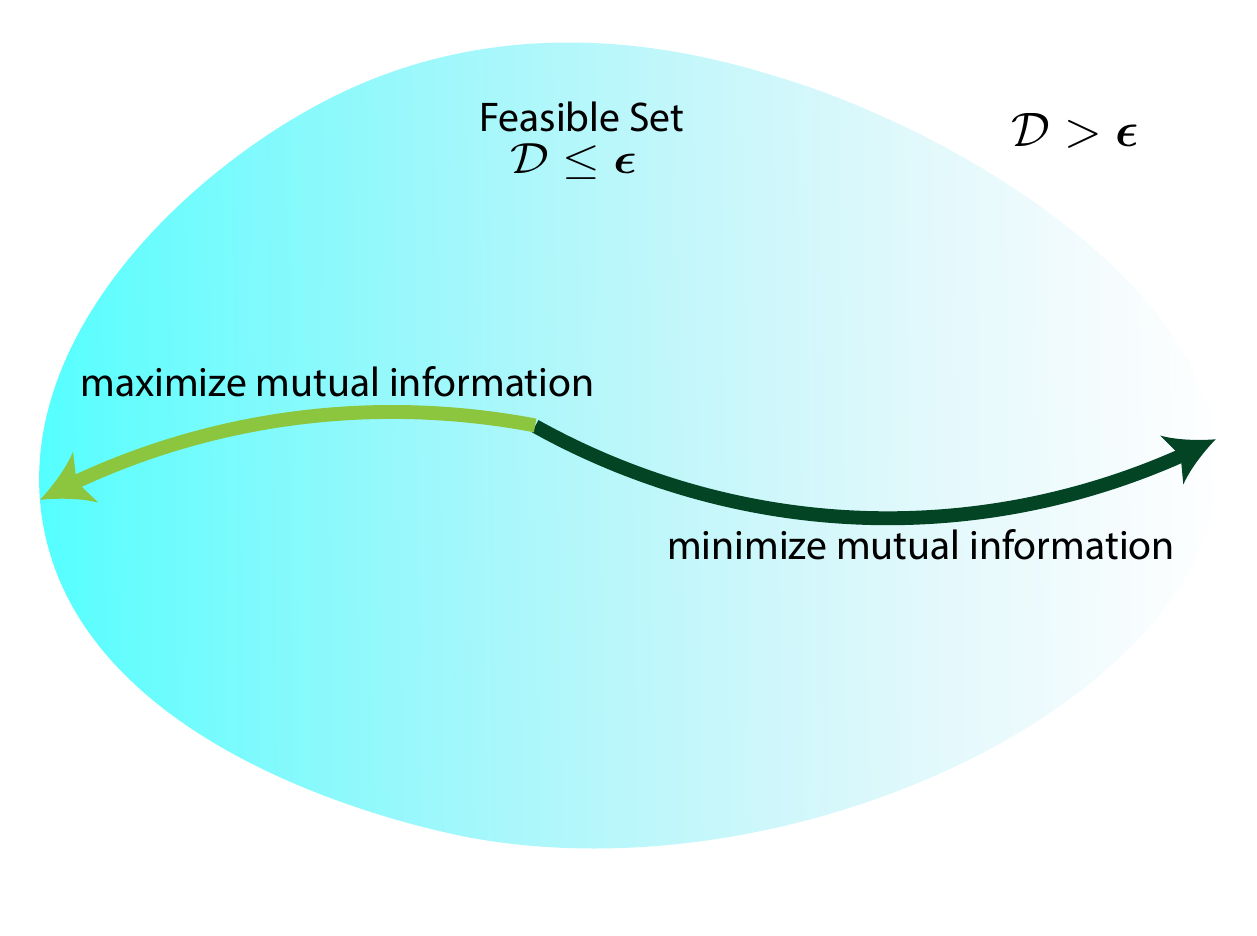} 
\end{tabular}
\caption{\textbf{Left:} Effect of $\alpha_1$ and $\gamma_1$ on the primal objective (mutual information). When $\alpha_1$ is positive we minimize mutual information within the feasible set, and when $\alpha_1$ is negative we maximize mutual information. When $\alpha_1$ is zero the preference is undetermined, and mutual information varies depending on initialization. Note that mutual information does not depend on the absolute value of $\alpha_1$ but only on its sign. \textbf{Right:} An illustration of this effect. Lagrangian dual optimization finds the maximum/minimum mutual information solution in the feasible set $\mc{D} \leq \epsilonv$.}
\label{fig:alpha_vs_mi}
\end{figure*}



In the remainder of this section we demonstrate the following empirical observations:
\begin{itemize}
\item LagVAE reliably maximizes/minimizes mutual information without violating the consistency constraints. InfoVAE, on the other hand, makes unpredictable and task-specific trade-offs between mutual information and consistency. 
\item LagVAE is Pareto optimal, as no InfoVAE hyper-parameter choice is able to achieve both better mutual information and better consistency (measured by $\MMD$ and $\ELBO$) than LagVAE. 
\end{itemize}

\subsection{VERIFICATION OF DUAL OPTIMIZATION}
We first verify that LagVAE reliably maximizes/minimizes mutual information subject to consistency constraints. We train LagVAE on MNIST according to Algorithm~\ref{alg:dual}. $\epsilonv$ is selected according to Section 6, where we first compute $\hat{\epsilonv} = (\hat{\epsilon}_1, \hat{\epsilon}_2)$ without information maximization/minimization by Eq.(\ref{eq:infovae_without_mi}). Next we choose slack variables $\gammav = (\gamma_1, \gamma_2)$, and set $\epsilonv = \hat{\epsilonv} + \gammav$. For $\gamma_1$ we explore values from 0.1 to 4.0, and for $\gamma_2$ we use the fixed value $0.5 \hat{\epsilon}_2$. 

The results are shown in Figure~\ref{fig:alpha_vs_mi}, where mutual information is estimated according to Eq.(\ref{eq:mi_estimation}). For any given slack $\gammav$, setting $\alpha_1$ to positive values and negative values respectively minimizes or maximizes the mutual information within the feasible set $\mc{D} \leq \epsilonv$. In particular, the absolute value of $\alpha_1$ does not affect the outcome, and only the sign matters. This is consistent with the expected behavior (Figure~\ref{fig:alpha_vs_mi} Left) where the model finds the maximum/minimum mutual information solution within the feasible set. 

\begin{figure}
\centering
\begin{subfigure}{0.45\textwidth}
\centering
\includegraphics[width=0.94\textwidth,trim=9px 3px 0 3px,clip]{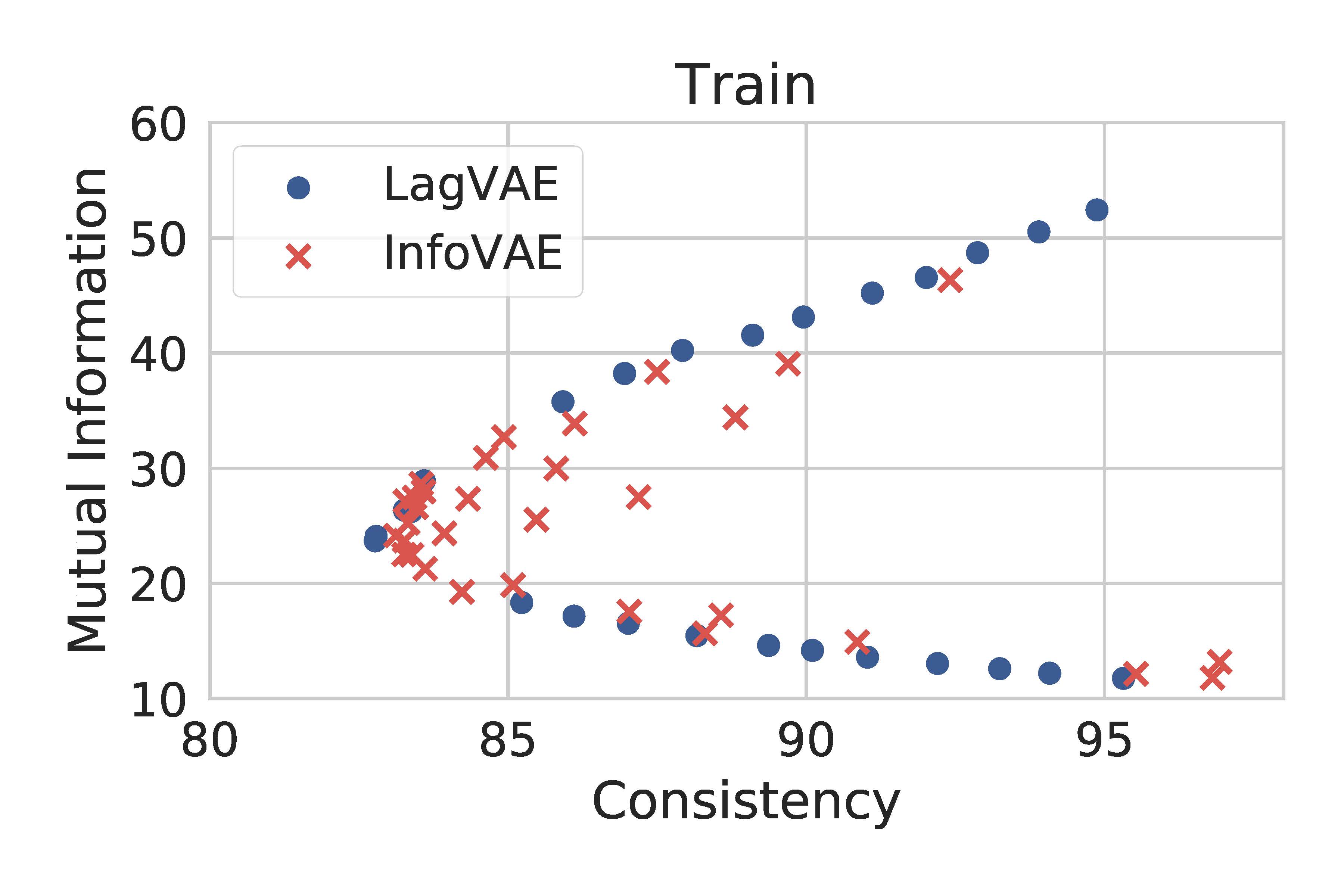}
\end{subfigure}
~
\begin{subfigure}{0.45\textwidth}
\centering
\includegraphics[width=\textwidth]{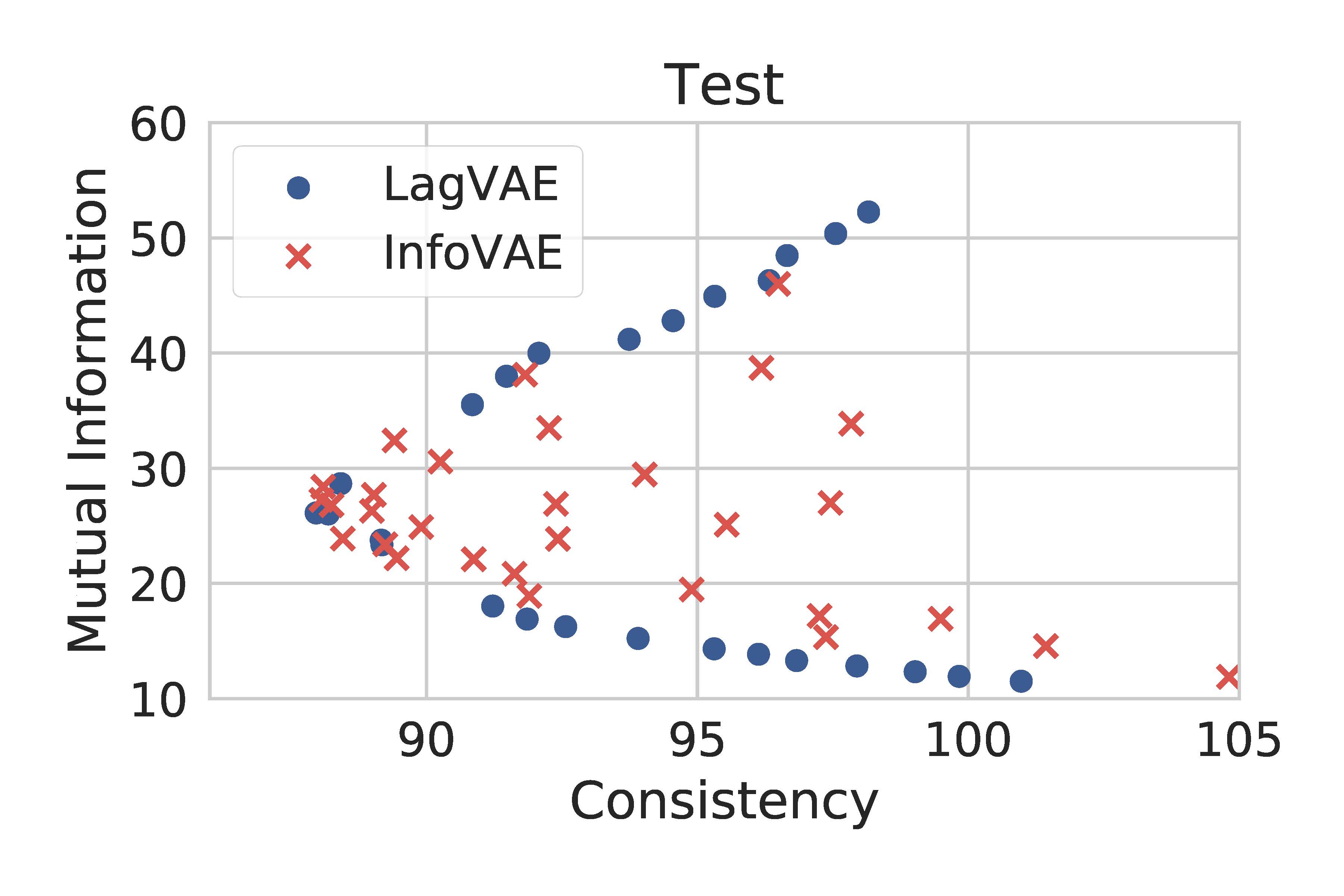}
\end{subfigure}
\caption{LagVAE Pareto dominates InfoVAE with respect to Mutual information and consistency ($\ELBO$ values) on train (top) and test (bottom) set. Each point is the outcome of one hyper-parameter choice for LagVAE / InfoVAE. When we maximize mutual information ($\alpha_1 < 0$), for any given $\ELBO$ value, LagVAE always achieve similar or larger mutual information; when we minimize mutual information ($\alpha_1 > 0$), for any given ELBO value, LagVAE always achieve similar or smaller mutual information.}
\label{fig:pareto}
\end{figure}

\subsection{VERIFICATION OF PARETO IMPROVEMENTS}
\label{sec:exp-pareto}
In this section we verify Pareto optimality of LagVAE. 
We evaluate LagVAE and InfoVAE on the MNIST dataset with a wide variety of hyper-parameters.
For LagVAE, we set $\epsilon_1$ for $\ELBO$ to be $\{83, 84, \ldots, 95\}$ and $\epsilon_2$ for $\MMD$ to be $0.0005$. For InfoVAE, we set $\alpha \in \{1, -1\}$, $\lambda_1 \in \{1, 2, 5, 10\}$ and $\lambda_2 \in \{1000, 2000, 5000, 10000\}$~\footnote{Code for this set of experiments is available at \url{https://github.com/ermongroup/lagvae}}. 

Figure~\ref{fig:pareto} plots the mutual information and $\ELBO$ achieved by both methods. Each point is the outcome of one hyper-parameter choice of LagVAE / InfoVAE. 
Regardless of the hyper-parameter choice of both models, no InfoVAE hyper-parameter lead to better performance on both mutual information and $\ELBO$ on the training set. This is expected because LagVAE always finds the maximum/minimum mutual information solution out of all solutions with given consistency value. The same trend is true even on the test set, indicating that it is not an outcome of over-fitting.   

\section{CONCLUSION}
Many existing objectives for latent variable generative modeling are Lagrangian dual functions of the same type of constrained optimization problem with fixed Lagrangian multipliers. This allows us to explore their statistical and computational trade-offs, and characterize all models in this class. Moreover, we propose a practical dual optimization method that optimizes both the Lagrange multipliers and the model parameters, allowing us to specify interpretable constraints and achieve Pareto-optimality empirically. 

In this work, we only considered Lagrangian (Info)VAE, but the method is generally applicable to other Lagrangian dual objectives. In addition we only considered mutual information preference. Exploring different preferences is a promising future directions. 




\subsubsection*{Acknolwledgements}
This research was supported by Intel Corporation, TRI, NSF (\#1651565, \#1522054, \#1733686) and FLI (\#2017-158687).

\bibliographystyle{icml2018}
\bibliography{ref}

\newpage

\FloatBarrier
\newpage
\onecolumn
\appendix
\section{Relationship with Existing Models}
\label{sec:relationship}
This section is an extension for Section 4. We apply elementary equivalent transformations (defined in Definition 1) to existing objectives and convert them to Lagrangian dual form of Eq.(\ref{eq:lag_dual}).

\paragraph{InfoGAN~\citep{infogan2016}} The InfoGAN objective\footnote{The original InfoGAN applies information maximization only to a subset of latent variables $z$.} is
\[ \mathcal{L}_{\mathrm{InfoGAN}} = -\Eb_{\pt(\xv, \zv)}[\log \qt(\zv|\xv)] + \JS(\pt(\xv)\Vert q(\xv)) \]
and this can be converted to
\begin{align*}
\begin{array}{lll} 
\mathcal{L}_{\mathrm{InfoGAN}} &= -I_{\pt}(\xv; \zv) & \text{ primal} \\ 
& + \E_{\pt(\xv)}[\KL(\pt(\zv|\xv)\Vert \qt(\zv|\xv))] + \JS(\pt(\xv)\Vert q(\xv)) & \text{ consistency}
\end{array}
\end{align*}
where $\alpha_1 = 0$, $\alpha_2 = -1$, and the Jensen-Shannon divergence $\JS$ is approximately optimized in a likelihood-free way with adversarial training. This equivalent formulation highlights the mutual information maximization property of InfoGAN. It also reveals that the InfoGAN objective encourages correct inference because of the $\KL(\pt(\zv|\xv)\Vert \qt(\zv|\xv))$ term.

\paragraph{Adversarial Autoencoder~\citep{adversarial_autoencoder2015}}
Adversarial Autoencoders 
\[ \mathcal{L}_{\mathrm{AAE}} = -\Eb_{\qt(\xv, \zv)}[\log \pt(\xv|\zv)] + \JS(\qt(\zv)\Vert p(\zv)) \] 
and this can be converted to:
\begin{align*}
\begin{array}{lll}
\mathcal{L}_{\mathrm{AAE}} &= -I_{\qt}(\xv; \zv) & \text{ primal} \\ 
& + \E_{\qt(\zv)}[\KL(\qt(\xv|\zv)\Vert \pt(\xv|\zv))] + \JS(\qt(\zv)\Vert p(\zv)) & \text{ consistency}
\end{array}
\end{align*}
For this objective $\alpha_1 = -1$ so mutual information is maximized. 

\paragraph{InfoVAE~\citep{zhao2017infovae}}
InfoVAE can be converted to:
\begin{align*}
\begin{array}{lll}
\mathcal{L}_{\mathrm{InfoVAE}} &= -\beta I_{\qt}(\xv; \zv) & \text{ primal} \\
& + \E_{\qt(\zv)}[\KL(\qt(\xv|\zv)\Vert \pt(\xv|\zv))] + (1-\beta) \KL(\qt(\zv) \Vert p(\zv)) + (\lambda - \beta-1) D(\qt(\zv)\Vert p(\zv)) & \text{ consistency}
\end{array}
\end{align*}
where $D$ is any divergence that can be optimized with likelihood-free approaches. For this objective $\beta$ can be selected to maximize/minimize mutual information. 

\paragraph{ALI/BiGAN~\citep{ali2016,bigan2016,alice2017}} ALI/BiGAN corresponds to direct minimization of $\JS(\qt(\xv, \zv) \Vert \pt(\xv, \zv))$, while ALICE~\citep{alice2017} has the following Lagrangian form:
\begin{align*}
\begin{array}{lll}
\mathcal{L}_{\mathrm{ALICE}} &= -I_{\qt}(\xv; \zv) & \text{ primal} \\ 
& + \E_{\qt(\zv)}[\KL(\qt(\xv|\zv)\Vert \pt(\xv|\zv))] + \JS(\qt(\xv, \zv) \Vert \pt(\xv, \zv)) & \text{ consistency} 
\end{array}
\end{align*}
For this objective $\alpha_1=-1$ so mutual information is maximized. 

\paragraph{CycleGAN/DiscoGAN~\citep{cyclegan2017,discogan2017}} can be written as 
\begin{align*}
\begin{array}{lll}
\mathcal{L}_{\mathrm{CycleGAN}} &= -I_{\qt}(\xv; \zv) -I_{\pt}(\xv; \zv) & \text{ primal} \\
& + \E_{\qt(\zv)}[\KL(\qt(\xv|\zv) \Vert \pt(\xv|\zv))] + 
\E_{\pt(\xv)}[\KL(\pt(\zv|\xv) \Vert \qt(\zv|\xv))]  & \text{ consistency} \\
& + \JS(q(\xv) \Vert \pt(\xv)) + \JS(\qt(\zv) \Vert p(\zv)) 
\end{array}
\end{align*}
For this objective $\alpha_1 = -1$, $\alpha_2 = -1$, so mutual information is maximized. In addition the objective encourages correct inference in both direction by matching $\qt(\xv|\zv)$ to $\pt(\xv|\zv)$ and matching $\pt(\zv|\xv)$ to $\qt(\zv|\xv)$. 

\paragraph{AS-VAE\citep{pu2017adversarial}} can be written as 
\begin{align*}
\begin{array}{lll}
\mc{L}_{\mathrm{AS-VAE}} &= -I_{\qt}(\xv; \zv) - I_{\pt}(\xv; \zv) & \text{ primal} \\
& + \KL(\qt(\xv, \zv) \Vert \pt(\xv, \zv)) + \KL(\pt(\xv, \zv) \Vert \qt(\xv, \zv)) & \text{ consistency} \\
& + \E_{p(\zv)}[\KL(\pt(\xv|\zv) \Vert \qt(\xv|\zv))] + \E_{q(\xv)}[\KL(\qt(\zv|\xv) \Vert \pt(\zv|\xv)] 
\end{array}
\end{align*}
For this model $\alpha_1 = -1$, $\alpha_2 = -1$, so mutual information is maximized. 



\section{Formal Closure Theorem and Proof}





We first consider the scenario where all divergences are $\KL$ divergence or reverse $\KL$ divergence, and then generalize to all divergences. Formally we consider a subset of possible Lagrangian dual functions of Eq.(\ref{eq:lag_dual}) which we define below
\begin{definition}[KL Lagrangian Objective]
\label{def:kl_lagrangian}
A KL Lagrangian objective $\mc{L}: \Theta \to \mathbb{R}$ is a linear combination of the following expressions: 
\begin{align*}
I_{\qt}(\xv; \zv) & \quad I_{\pt}(\xv; \zv) \\
\KL(\pt(\xv, \zv)||\qt(\xv, \zv)) & \quad \KL(\qt(\xv, \zv)||\pt(\xv, \zv)) \\
\KL(\pt(\xv)||q(\xv)) & \quad \KL(q(\xv)||\pt(\xv)) \\ 
\KL(p(\zv)||\qt(\zv)) & \quad \KL(\qt(\zv)||p(\zv)) \\ 
\Eb_{\pt(\xv)} \KL(\pt(\zv|\xv)||\qt(\zv|\xv)) & \quad \Eb_{q(\xv)} \KL(\qt(\zv|\xv)||\pt(\zv|\xv)) \\
\Eb_{p(\zv)} \KL(\pt(\xv|\zv)||\qt(\xv|\zv)) & \quad \Eb_{\qt(\zv)} \KL(\qt(\xv|\zv)||\pt(\xv|\zv)) \numberthis \label{eq:kl_terms}
\end{align*}
\end{definition}





We also provide a more formal version of Definition~\ref{def:equivalence}.

\textbf{Definition 1} (formal) Define the following set of equivalences as the elementary equivalences
\begin{align*}
\E_*[\log \pt(\xv, \zv)] &\equiv \E_*[\log \pt(\xv) + \log \pt(\zv|\xv)] \\
\E_*[\log \pt(\xv, \zv)] &\equiv \E_*[\log p(\zv) + \log \pt(\xv|\zv)] \\
\E_*[\log \qt(\xv, \zv)] &\equiv \E_*[\log q(\xv) + \log \qt(\zv|\xv)] \\
\E_*[\log \qt(\xv, \zv)] &\equiv \E_*[\log \qt(\zv) + \log \qt(\xv|\zv)] \\
\E_{q(x)}[\log q(\xv)] &\equiv 0 \,\qquad \E_{p(z)}[\log p(\zv)] \equiv 0
\end{align*}
where each $\E_*$ indicates two equivalences: one for $\E_{\pt(\xv, \zv)}$ and one for $\E_{\qt(\xv, \zv)}$. For each equivalence $\mathrm{LHS} \equiv \mathrm{RHS}$ defined above, we refer to $\mathrm{LHS} - \mathrm{RHS}$ as an elementary null expression. An optimization objective $\mc{L}$ is elementary equivalent to another optimization objective $\mc{L}'$ if $\mc{L}' - \mc{L}$ is a linear combination of elementary null expressions. 


Based on these definition we can formalize Theorem 1. We will separately consider the cases of likelihood based (T1), unary likelihood free (T2) and binary likelihood free (T3) objectives as defined in Definition~\ref{def:tractable_families}. 

\begin{theorem}
\label{thm:elementary_equivalence1}
Define the following KL Lagrangian objectives 

\textbf{1)} Variational mutual information maximization 
\begin{align*}
\mathcal{L}_{\mathrm{VMI}} = -I_{\pt}(\xv; \zv) + \KL(\pt(\zv|\xv)\Vert \qt(\zv|\xv)) = \E_{\pt(\xv, \zv)}[\log \qt(\zv|\xv)]
\end{align*}
\textbf{2)} $\beta$-VAE, where $\beta$ is any real number
\begin{align*}
\mathcal{L}_{\beta-\mathrm{VAE}} &= (\beta - 1) I_{\qt}(\xv, \zv) + \KL(\qt(\xv|\zv)\Vert \pt(\xv|\zv)) + \beta \KL(\qt(\zv)\Vert p(\zv)) \\
&= - \E_{\qt(\xv, \zv)}[\log \pt(\xv|\zv)] + \beta \E_{\qt(\xv, \zv)}[\log \qt(\zv|\xv)] - \beta \E_{\qt(\zv)}[\log p(\zv)] 
\end{align*}
Then any likelihood based computable KL Lagrangian objective is elementary equivalent to a linear combination of $\mathcal{L}_{\beta-\mathrm{VAE}}$ (for some $\beta \in \mathbb{R}$) and $\mathcal{L}_{\mathrm{VMI}}$. 
\end{theorem}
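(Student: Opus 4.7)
The plan is to recast Theorem~\ref{thm:elementary_equivalence1} as a concrete linear algebra problem. A KL Lagrangian objective is by definition a real linear combination of the twelve generators listed in Definition~\ref{def:kl_lagrangian}. After quotienting by the elementary equivalences---chain-rule expansions of $\log p_\theta(\xv,\zv)$ and $\log q_\theta(\xv,\zv)$, together with the null constants $\E_{q(\xv)}[\log q(\xv)] \equiv 0$ and $\E_{p(\zv)}[\log p(\zv)] \equiv 0$---every generator reduces to a linear combination of a small finite collection of atomic log-expectation terms $\E_\mu[\log \nu]$, where $\mu$ ranges over the relevant measures and $\nu$ over the various conditional and marginal densities. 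The chain rule supplies four independent null relations (one for each joint-log/measure pair), yielding a finite-dimensional quotient space $V/N$ of canonical forms. The first step of the proof is to enumerate these atoms and relations explicitly and to record each of the twelve generators as a coordinate vector in $V/N$.

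Next I would identify $\bar{T}_1 \subseteq V/N$, the subspace corresponding to likelihood-based-computable expressions: after reduction, $\bar{T}_1$ is spanned by the images of the atoms in $T_1$, which collapse under the chain rule to a five-element set including $\E_{p_\theta}[\log p_\theta(\xv|\zv)]$, $\E_{p_\theta}[\log q_\theta(\zv|\xv)]$, $\E_{q_\theta}[\log p_\theta(\xv|\zv)]$, $\E_{q_\theta(\zv)}[\log p(\zv)]$, and $\E_{q_\theta}[\log q_\theta(\zv|\xv)]$. Requiring that a KL Lagrangian lie in $\bar{T}_1$ is equivalent to demanding that the coefficient of every atom outside $\bar{T}_1$ vanish in the quotient---those involving $\log p_\theta(\zv|\xv)$, $\log q_\theta(\xv|\zv)$, $\log p_\theta(\xv)$ under the $\theta$-dependent measures, $\log q_\theta(\zv)$, and so on. Writing these vanishing conditions as a linear system in the twelve parameters $(\alpha_1, \alpha_2, \lambda_1, \ldots, \lambda_{10})$ and back-substituting shows that the image of KL Lagrangians inside $\bar{T}_1$ is exactly three-dimensional, cut out by the single nontrivial relation that ties the coefficient of $\E_{q_\theta}[\log q_\theta(\zv|\xv)]$ to the negative of the coefficient of $\E_{q_\theta(\zv)}[\log p(\zv)]$, and additionally having zero component along $\E_{p_\theta}[\log p_\theta(\xv|\zv)]$.

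Finally I would verify that this three-dimensional image is exactly spanned by $\mathcal{L}_{\mathrm{VMI}}$ together with the family $\{\mathcal{L}_{\beta-\mathrm{VAE}}\}_{\beta \in \mathbb{R}}$. The VMI objective reduces canonically to a multiple of $\E_{p_\theta}[\log q_\theta(\zv|\xv)]$, supplying the direction transverse to the $\beta$-VAE plane. The parametric family $\{a \mathcal{L}_{\beta-\mathrm{VAE}} : a, \beta \in \mathbb{R}\}$ is linearly closed via the identity $a_1 \mathcal{L}_{\beta_1-\mathrm{VAE}} + a_2 \mathcal{L}_{\beta_2-\mathrm{VAE}} = (a_1+a_2)\,\mathcal{L}_{\beta^\star-\mathrm{VAE}}$ with $\beta^\star = (a_1 \beta_1 + a_2 \beta_2)/(a_1+a_2)$; specializing to $a_1 + a_2 = 0$ extracts the direction $\E_{q_\theta}[\log q_\theta(\zv|\xv)] - \E_{q_\theta(\zv)}[\log p(\zv)]$, while $\mathcal{L}_{0-\mathrm{VAE}}$ supplies the independent direction $-\E_{q_\theta}[\log p_\theta(\xv|\zv)]$. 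Together these three vectors span the image, proving the claim. The main obstacle will be the mechanical but error-prone bookkeeping of coefficients across the twelve generators and the many atoms; a key subtlety is to make sure that elementary-equivalent atoms (for example $\E_{q_\theta}[\log q_\theta(\zv|\xv)]$ and $\E_{q_\theta}[\log q_\theta(\xv|\zv)] + \E_{q_\theta(\zv)}[\log q_\theta(\zv)]$) are identified with the same coordinate in $V/N$, so as not to overcount degrees of freedom in the vanishing linear system.
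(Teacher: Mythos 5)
Your proposal is correct and follows essentially the same route as the paper's proof: both reduce the claim to finite-dimensional linear algebra over a basis of $\E_\mu[\log\nu]$ atoms, treat the elementary equivalences as a null subspace, and establish the result by an explicit rank/dimension computation (your quotient-space image of dimension $3$ is exactly the paper's $\dim(\mathcal{S}_{\beta-\mathrm{VAE}}+\mathcal{S}_{\mathrm{VMI}}+\mathcal{P}) = \dim((\mathcal{R}+\mathcal{P})\cap(\mathcal{T}_{\mathrm{lb}}+\mathcal{P})) = 13$ minus $\dim\mathcal{P}=10$). The only difference is presentational --- you work in $V/N$ and impose vanishing conditions directly, while the paper compares dimensions of subspace sums and intersections --- and your identification of the five reduced likelihood-based atoms and the two cutting relations matches the paper's computation.
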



\begin{theorem}
\label{thm:elementary_equivalence2}
Define the following KL Lagrangian objectives

\textbf{3)} KL InfoGAN, where $\lambda_1, \lambda_2$ are any real number
\begin{align*}
\mathcal{L}_{\mathrm{InfoGAN}} &= - I_{\pt}(\xv; \zv) + \KL(\pt(\zv|\xv)\Vert \qt(\zv|\xv)) + \lambda_1 \KL(\pt(\xv)\Vert q(\xv)) + \lambda_2 \KL(q(\xv)\Vert \pt(\xv)) \\
&= - \E_{\pt(\xv, \zv)}[\log \qt(\zv|\xv)] + \lambda_1 \KL(\pt(\xv)\Vert q(\xv)) + \lambda_2 \KL(q(\xv)\Vert \pt(\xv))
\end{align*}
\textbf{4)} KL InfoVAE, where $\alpha_1, \lambda_3, \lambda_4$ are any real number
\begin{align*}
\mathcal{L}_{\mathrm{InfoVAE}} &= \alpha_1 I_{\qt}(\xv; \zv) + \KL(\qt(\xv|\zv)\Vert \pt(\xv|\zv)) + \lambda_3 \KL(\qt(\zv)\Vert p(\zv)) + \lambda_4 \KL(p(\zv)\Vert \qt(\zv)) \\
&= - \E_{\qt(\xv, \zv)}[\log \pt(\xv|\zv)] +  (\alpha + 1) \E_{\qt(\xv, \zv)}[\log \qt(\zv|\xv)] - (\alpha + 1) \E_{\qt(\zv)}[\log p(\zv)] + \\
& \qquad (\lambda_3 - \alpha_1 - 1) \KL(\qt(\zv)\Vert p(\zv)) + \lambda_4 \KL(p(\zv)\Vert \qt(\zv))
\end{align*}
Then any unary likelihood free computable KL Lagrangian objective is elementary equivalent to a linear combination of $\mathcal{L}_{\mathrm{InfoGAN}}$ (for some $\lambda_1, \lambda_2 \in \mathbb{R}$) and $\mathcal{L}_{\mathrm{InfoVAE}}$ (for some $\alpha_1, \lambda_3, \lambda_4 \in \mathbb{R}$).
\end{theorem}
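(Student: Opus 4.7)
The plan is to bootstrap off Theorem~\ref{thm:elementary_equivalence1} (the likelihood-based case) by showing that InfoGAN and InfoVAE together absorb the additional $T_2$ content that distinguishes unary likelihood-free from likelihood-based computability. Given any unary likelihood-free computable KL Lagrangian objective $\mc{L}$, the strategy is to subtract appropriate scalar multiples of $\mc{L}_{\mathrm{InfoGAN}}$ and $\mc{L}_{\mathrm{InfoVAE}}$ that cancel its marginal-KL content, apply Theorem~\ref{thm:elementary_equivalence1} to the resulting likelihood-based residual, and then fold the VMI and $\beta$-VAE pieces that appear back into InfoGAN and InfoVAE respectively.

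First I would expand every KL Lagrangian base term via chain rule and the identities $\E_{q(\xv)}[\log q(\xv)]\equiv\E_{p(\zv)}[\log p(\zv)]\equiv 0$ (together with the structural facts $\pt(\zv)=p(\zv)$ and $\qt(\xv)=q(\xv)$) into a canonical form consisting of $T_1$ ``tractable'' atoms plus five ``untractable'' atoms: $\E_{\pt(\xv)}[\log\pt(\xv)]$, $\E_{q(\xv)}[\log\pt(\xv)]$, $\E_{\pt(\xv)}[\log q(\xv)]$, $\E_{p(\zv)}[\log\qt(\zv)]$, and $\E_{\qt(\zv)}[\log\qt(\zv)]$. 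Unary likelihood-free computability requires the coefficients of these untractable atoms to assemble into the four allowed $T_2$ divergences (the marginal KLs between $\pt(\xv),q(\xv)$ and between $\qt(\zv),p(\zv)$); by inspection this imposes exactly one linear constraint on the twelve KL Lagrangian coefficients, namely that the net coefficients of $\E_{\pt(\xv)}[\log\pt(\xv)]$ and of $\E_{\pt(\xv)}[\log q(\xv)]$ sum to zero.

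Next I would construct an InfoGAN instance with scaling $a$ equal to the coefficient of $\E_{\pt(\xv)}\KL(\pt(\zv|\xv)\|\qt(\zv|\xv))$ in the atomic expansion of $\mc{L}$, with $\lambda_1,\lambda_2$ chosen so that $a\lambda_1,a\lambda_2$ match the coefficients of the two $x$-marginal KLs; symmetrically construct an InfoVAE instance $b\mc{L}_{\mathrm{InfoVAE}}$ using the $z$-side terms. After subtracting $a\mc{L}_{\mathrm{InfoGAN}}+b\mc{L}_{\mathrm{InfoVAE}}$, all untractable atoms in the expansion cancel exactly --- this is where the single unary likelihood-free constraint is consumed. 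The residual is therefore a likelihood-based computable KL Lagrangian objective, and Theorem~\ref{thm:elementary_equivalence1} expresses it as $c\mc{L}_{\mathrm{VMI}}+d\mc{L}_{\beta-\mathrm{VAE}}$ for some $c,d,\beta$. Since $\mc{L}_{\mathrm{VMI}}=\mc{L}_{\mathrm{InfoGAN}}(\lambda_1{=}0,\lambda_2{=}0)$ and $\mc{L}_{\beta-\mathrm{VAE}}=\mc{L}_{\mathrm{InfoVAE}}(\alpha_1{=}\beta{-}1,\lambda_3{=}\beta,\lambda_4{=}0)$, a renormalization collapses the two InfoGAN contributions into a single $\mc{L}_{\mathrm{InfoGAN}}(\lambda_1',\lambda_2')$ and similarly for InfoVAE, yielding the claimed form.

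The main obstacle is this final renormalization step: when the two InfoGAN scalings cancel (i.e.\ $a+c=0$) but the accumulated $T_2$ $x$-marginal coefficients are nonzero, one cannot absorb both contributions into a single finitely-parameterized InfoGAN. Handling this degeneracy requires either interpreting ``linear combination'' in the generalized linear-span sense (allowing multiple InfoGAN/InfoVAE summands with distinct parameters), or a slightly more delicate case analysis showing this collision cannot actually arise under the unary likelihood-free constraint. A dimension count is reassuring: the InfoGAN$+$InfoVAE span (allowing multiple copies) is exactly $3+4=7$ dimensional in the $8$-dimensional quotient of KL Lagrangian objectives modulo the four chain-rule identities ($\KL(\pt(\xv,\zv)\|\qt(\xv,\zv))$ has two decompositions, and symmetrically for the reverse direction), which matches the $7$-dimensional unary likelihood-free subspace defined by our single constraint.
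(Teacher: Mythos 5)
Your proposal is essentially correct, but it takes a genuinely different route from the paper. The paper proves Theorems 3--5 simultaneously by a single non-constructive dimension count: it embeds every KL Lagrangian objective, every elementary null expression, and every hardness class into a $20$-dimensional space of basic terms $\E_{*}[\log(\cdot)]$, writes down explicit basis matrices for the subspaces $\mathcal{R}$, $\mathcal{P}$, $\mathcal{T}_{\mathrm{ulf}}$, $\mathcal{S}_{\mathrm{InfoGAN}}+\mathcal{S}_{\mathrm{InfoVAE}}$, verifies the inclusion $\mathcal{S}_{\mathrm{InfoGAN}}+\mathcal{S}_{\mathrm{InfoVAE}}+\mathcal{P}\subseteq(\mathcal{R}+\mathcal{P})\cap(\mathcal{T}_{\mathrm{ulf}}+\mathcal{P})$, and then checks by rank computation that both sides have dimension $17$, forcing equality. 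You instead give a constructive reduction to Theorem~\ref{thm:elementary_equivalence1}: identify the five intractable atoms surviving in the quotient modulo elementary null expressions, observe that the $T_2$ divergences span exactly the codimension-one subspace of those atoms cut out by the relation ``coefficient of $\E_{\pt(\xv)}[\log\pt(\xv)]$ plus coefficient of $\E_{\pt(\xv)}[\log q(\xv)]$ equals zero,'' cancel the intractable content with explicit InfoGAN/InfoVAE instances, and hand the likelihood-based residual to the previous theorem. I checked your atom bookkeeping (the five intractable atoms and the single linear constraint) against the paper's matrices and it is consistent; your $7=3+4$ dimension count agrees with the paper's table entry $17-\dim\mathcal{P}=17-10$. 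What your route buys is modularity and an explicit, interpretable characterization of the unary-likelihood-free class that the paper never states; what the paper's route buys is uniformity across all three hardness classes and avoidance of any case analysis, at the cost of having to trust several rank computations. The degeneracy you flag (overall InfoGAN scaling $a+c=0$ with nonzero marginal-KL content, e.g.\ the objective $\KL(\pt(\xv)\Vert q(\xv))$ by itself) is real, but it afflicts the paper's literal theorem statement equally: the paper's proof actually establishes membership in the \emph{column span} of $\left(\begin{array}{cc} S_{\mathrm{InfoGAN}} & S_{\mathrm{InfoVAE}}\end{array}\right)$, which is precisely your ``generalized linear-span'' reading, so you should simply adopt that interpretation rather than attempt a case analysis ruling the collision out (it cannot be ruled out). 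One minor point to tighten: your prescription for the scalar $a$ as ``the coefficient of $\E_{\pt(\xv)}[\KL(\pt(\zv|\xv)\Vert\qt(\zv|\xv))]$'' is not canonical, since the twelve generating expressions are linearly dependent modulo $\mathcal{P}$; it is cleaner to say that any nonzero $a$ works when the $x$-marginal intractable coefficients are nonzero, with $\lambda_1,\lambda_2$ rescaled accordingly.
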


\begin{theorem}
\label{thm:elementary_equivalence3}
Define the following KL Lagrangian objective

\textbf{5)} KL InfoBiGAN, where $\alpha_2, \lambda_5, \lambda_6$ are any real number 
\begin{align*}
\mathcal{L}_{\mathrm{InfoBiGAN}} &= \alpha_2 I_{\pt}(\xv; \zv) + \KL(\pt(\zv|\xv)\Vert \qt(\zv|\xv)) + \lambda_5 \KL(\pt(\xv)\Vert q(\xv)) + \lambda_6 \KL(q(\xv)\Vert \pt(\xv)) \\
&= (\alpha_2 + 1) \KL(\pt(\xv, \zv)\Vert \qt(\xv, \zv)) + (\lambda_5 - \alpha_2 - 1) \KL(\pt(\xv)\Vert q(\xv)) + \\
&\qquad \lambda_6 \KL(q(\xv)\Vert \pt(\xv)) - \alpha_2 \E_{\pt(\xv, \zv)}[\log \qt(\zv|\xv)]
\end{align*}
Then any binary likelihood free computable KL Lagrangian objective of the  Objective family is elementary equivalent to a linear combination of $\mathcal{L}_{\mathrm{InfoVAE}}$ (for some $\alpha_1, \lambda_3, \lambda_4 \in \mathbb{R}$) and $\mathcal{L}_{\mathrm{InfoBiGAN}}$ (for some $\alpha_2, \lambda_5, \lambda_6 \in \mathbb{R}$).
\end{theorem}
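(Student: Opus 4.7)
The plan is a finite-dimensional linear-algebra argument in the quotient space of KL Lagrangian objectives modulo elementary equivalence. A KL Lagrangian objective is specified by the $12$ real coefficients (one per term in Definition~\ref{def:kl_lagrangian}). The elementary null expressions span a subspace $N$ inside an ambient vector space $W$ whose basis consists of monomials of the form $\E_{r}[\log s]$ with $r \in \{p_\theta, q_\theta\}$ and $s \in \{p_\theta, q_\theta, p, q\}$ ranging over joint, marginal, and conditional factors; modding out by $N$ gives a finite-dimensional quotient $W/N$, and ``binary-likelihood-free computable'' picks out a specific subspace of $W/N$, namely the span of the reduced images of $T_1 \cup T_2 \cup T_3$. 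The goal is to exhibit this subspace as exactly the image of the $6$-parameter family $\mathcal{L}_{\mathrm{InfoVAE}} + \mathcal{L}_{\mathrm{InfoBiGAN}}$.

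First I will check the easy ($\supseteq$) direction. The two alternative forms of $\mathcal{L}_{\mathrm{InfoVAE}}$ and $\mathcal{L}_{\mathrm{InfoBiGAN}}$ given in the theorem statement already display each as a linear combination of elements of $T_1 \cup T_2 \cup T_3$, so both families are binary-likelihood-free computable by construction. A short direct computation in $W/N$ then shows that the six parameters $(\alpha_1, \lambda_3, \lambda_4, \alpha_2, \lambda_5, \lambda_6)$ are linearly independent, giving a $6$-dimensional subspace inside the quotient.

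For the harder ($\subseteq$) direction, I will start with an arbitrary KL Lagrangian objective $\mathcal{L}$ whose equivalence class lies in the binary-likelihood-free span, and normalize it in two steps. Step~(a): fully expand every mutual information and every KL divergence appearing in the 12 basic terms, using identities such as $I_{q_\theta}(\xv;\zv) = \E_{q_\theta}[\log q_\theta(\zv|\xv)] - \E_{q_\theta}[\log q_\theta(\zv)]$ and similar expansions for each KL, to obtain a representative in $W$. Step~(b): apply the four chain-rule equivalences and two constant equivalences of Definition~1 to bring $\mathcal{L}$ into a unique canonical representative with respect to a chosen basis of $W/N$. In this canonical form, the hypothesis that $\mathcal{L}$ is elementary equivalent to a combination of $T_1 \cup T_2 \cup T_3$ terms forces certain ``forbidden'' monomials, e.g.\ $\E_{q_\theta}[\log p_\theta(\xv)]$ and $\E_{p_\theta}[\log q_\theta(\zv)]$, to have zero coefficient, because these never appear in the canonical reduction of any element of $T_1 \cup T_2 \cup T_3$. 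These vanishing conditions are precisely the linear relations on the original $12$ coefficients that cut the coefficient space down to a $6$-dimensional family, and solving them explicitly exhibits $\mathcal{L}$ as a sum $\mathcal{L}_{\mathrm{InfoVAE}} + \mathcal{L}_{\mathrm{InfoBiGAN}}$ for appropriate $(\alpha_1, \lambda_3, \lambda_4, \alpha_2, \lambda_5, \lambda_6)$.

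The main obstacle will be purely bookkeeping in Step~(b) and in the vanishing-coefficient argument: one must fix an explicit basis of $W/N$ (for instance the eight conditional-log expectations $\E_{r}[\log s(\zv|\xv)]$ and $\E_{r}[\log s(\xv|\zv)]$ for $r,s \in \{p_\theta,q_\theta\}$, together with marginal-log expectations against the fixed priors $p(\zv)$ and $q(\xv)$ that are not reducible by the elementary equivalences), catalogue which of these basis elements arise from the reduction of each of the $14$ binary-likelihood-free generators, and verify that the resulting linear system over this basis has exactly the InfoVAE+InfoBiGAN parametrization as its solution set. The proofs of Theorems~\ref{thm:elementary_equivalence1} and~\ref{thm:elementary_equivalence2} perform the analogous bookkeeping with strictly fewer generators, and the binary case extends that computation by adjoining only the joint-KL term $\KL(p_\theta(\xv,\zv)\|q_\theta(\xv,\zv))$ (and its reverse), which contributes exactly the extra coupling needed to absorb the otherwise-forbidden joint monomials $\E_{p_\theta}[\log q_\theta(\xv,\zv)]$ into the InfoBiGAN parameters $(\lambda_5,\lambda_6)$.
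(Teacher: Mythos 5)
Your plan is essentially the paper's own proof. Both arguments reduce the claim to finite-dimensional linear algebra over the same ambient space of monomials $\E_{r}[\log s]$ with $r\in\{p_\theta(\xv,\zv),q_\theta(\xv,\zv)\}$ (the $20$-dimensional space spanned by the paper's basis $\Ev$), modulo the span $\mathcal{P}$ of the elementary null expressions, and both close by showing that the InfoVAE$+$InfoBiGAN family exhausts the binary-likelihood-free computable part of the space $\mathcal{R}$ of KL Lagrangian objectives. The paper phrases the final step as a dimension count, verifying $\dim(\mathcal{S}_{\mathrm{InfoBiGAN}}+\mathcal{S}_{\mathrm{InfoVAE}}+\mathcal{P})=\dim\left((\mathcal{R}+\mathcal{P})\cap(\mathcal{T}_{\mathrm{blf}}+\mathcal{P})\right)=18$; your canonical-form / vanishing-coefficient formulation is the same computation carried out in the quotient.

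Two concrete slips in your sketch would need repair before it closes. First, the image of the family in the quotient is $8$-dimensional, not $6$: beyond the six named parameters, each objective carries a fixed term with coefficient one ($\KL(q_\theta(\xv|\zv)\Vert p_\theta(\xv|\zv))$ in InfoVAE, $\KL(p_\theta(\zv|\xv)\Vert q_\theta(\zv|\xv))$ in InfoBiGAN), and the two scalars of the linear combination supply two further degrees of freedom; the paper's rank table reads $18=10+8$ once the $10$-dimensional $\mathcal{P}$ is added. If you aim your linear system at a $6$-dimensional solution set, the dimensions will not match. Second, your sample ``forbidden monomial'' $\E_{q_\theta}[\log p_\theta(\xv)]$ is not forbidden: it occurs with coefficient $-1$ in the reduction of $\KL(q(\xv)\Vert p_\theta(\xv))\in T_2$. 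More importantly, forbidden-ness cannot be decided monomial by monomial, because the null expressions permit trades such as $\E_{p_\theta}[\log q_\theta(\zv)]\equiv\E_{p_\theta}[\log q_\theta(\xv,\zv)]-\E_{p_\theta}[\log q_\theta(\xv|\zv)]$; one must intersect with $\mathcal{P}$ (equivalently, pick an honest basis of the quotient), which is exactly the subspace-intersection rank computation the paper performs. With those two corrections your argument coincides with the paper's.
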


We briefly sketch the proof before a formal proof.  

\textbf{1)} We find a set of ``basic terms'' (e.g. $\Eb_{q(\xv)}[\log \pt(\xv)]$, we will formally define in theorem) as a  ``basis'' $\Ev = (\text{term}_1, \text{term}_2, \cdots)$. This basis spans a vector space: for each vector $\mathbf{n} \in \mathbb{R}^{|\Ev|}$, $\Ev \mathbf{n}$ is a linear combination of these basic terms. The set of terms in $\Ev$ must be large enough so that step \textbf{2)} is possible.

\textbf{2)} Each KL Lagrangian objective $\mc{L}$ in Definition~\ref{def:kl_lagrangian} is a linear combination of basic terms (columns of $\Ev$), so can be represented a a real vector $\mathbf{n}$ under basis $\Ev$: $\mc{L} = \Ev \mathbf{n}$. Similarly, each tractable expression in Definition~\ref{def:tractable_families}, and each elementary null expression in Definition~\ref{def:equivalence}, is a linear combination of basic terms. Both can be represented as real vectors under basis $\Ev$. 

\textbf{3)} By Definition~\ref{def:tractable_families}, a KL Lagrangian objective is (likelihood based/unary likelihood free/binary likelihood free) computable, if and only if we can subtract from this objective some \emph{elementary null expressions} (See Definition~\ref{def:equivalence}) to derive a linear combination of (likelihood based/unary likelihood free/binary likelihood free) computable expressions. 
This relationship can be interpreted as a subspace space equivalence relationship. If the the following two subspaces are identical: 

1. Direct sum of the subspace spanned by a set of KL Lagrangian objectives and the subspace spanned by elementary null expressions

2. Direct sum of the subspace spanned by (likelihood based/unary likelihood free/binary likelihood free) expressions and the subspace spanned by elementary null expressions

we can conclude that this set of KL Lagrangian objectives include every objective in that computation hardness class. This is exactly the statement in in Theorem~\ref{thm:elementary_equivalence1} \ref{thm:elementary_equivalence2} \ref{thm:elementary_equivalence3}.

\begin{proof}[Proof of Theorem~\ref{thm:elementary_equivalence1} \ref{thm:elementary_equivalence2} \ref{thm:elementary_equivalence3}]

We map all optimization objectives into a vector space $V \subseteq \left(\Theta \rightarrow \mathbb{R}\right)$ with standard function addition and scalar multiplication as operations. 
Define the following set of log probability measures:
\begin{align*}
\pv_{\atoms} = \left( \begin{array}{c} \log \pt(\xv, \zv) \\ \log \pt(\xv|\zv) \\ \log \pt(\zv|\xv) \\ \log \pt(\xv) \\ \log p(\zv) \end{array} \right) \quad
\qv_{\atoms} = \left( \begin{array}{c} \log \qt(\xv, \zv) \\ \log \qt(\xv|\zv) \\ \log \qt(\zv|\xv) \\ \log q(\xv) \\ \log \qt(\zv) \end{array} \right) 
\end{align*}
Based on expectation over these probability measures, we define the following set of 20 basic terms as our basis
\begin{align*}
\Ev = \left( \begin{array}{c} \E_{\pt(\xv, \zv)}[\pv_{\atoms}] \\ \E_{\pt(\xv, \zv)}[\qv_{\atoms}] \\ \E_{\qt(\xv, \zv)}[\pv_{\atoms}] \\ \E_{\qt(\xv, \zv)}[\qv_{\atoms}] \end{array} \right)^T
\end{align*}
We proceed to show that all KL Lagrangian objectives defined in Definition~\ref{def:kl_lagrangian}, tractable expressions in Definition~\ref{def:tractable_families}, and elementary null expressions defined Definition~\ref{def:equivalence} are linear combinations of this set of basis terms.

\textbf{1) KL Lagrangian Objectives}. 
Let $R \in \mathbb{R}^{20 \times 12}$ be
\begin{align*}
R=
\left(\begin{array}{cc}
\begin{array}{cccccc}
0 & 1 & 0 & 0 & 0 & 0 \\
0 & 0 & 1 & 0 & 0 & 0 \\
1 & 0 & 0 & 1 & 0 & 0 \\
0 & 0 & 0 & 0 & 1 & 0 \\
-1 & 0 & 0 & 0 & 0 & 1 \\
0 & -1 & 0 & 0 & 0 & 0 \\
0 & 0 & -1 & 0 & 0 & 0 \\
0 & 0 & 0 & -1 & 0 & 0 \\
0 & 0 & 0 & 0 & -1 & 0 \\
0 & 0 & 0 & 0 & 0 & -1 \\
\end{array} & 0 \\
0 & 
\begin{array}{cccccc}
0 & 1 & 0 & 0 & 0 & 0 \\
0 & 0 & 1 & 0 & 0 & 0 \\
0 & 0 & 0 & 1 & 0 & 0 \\
0 & 0 & 0 & 0 & 1 & 0 \\
0 & 0 & 0 & 0 & 0 & 1 \\
0 & -1 & 0 & 0 & 0 & 0 \\
0 & 0 & -1 & 0 & 0 & 0 \\
1 & 0 & 0 & -1 & 0 & 0 \\
0 & 0 & 0 & 0 & -1 & 0 \\
-1 & 0 & 0 & 0 & 0 & -1 \\
\end{array} 
\end{array} \right)
\end{align*}
Each column of the matrix is one of the twelve expressions defined in Definition~\ref{def:kl_lagrangian} under the basis $\Ev$. 
Therefore any KL Lagrangian optimization objective $\mathcal{L}$ can be represented as a vector $\nv \in \mathbb{R}^{12}$, where 
\[ \mathcal{L} = \Ev R \nv \]

Denote the subspace spanned by the columns of $R$ as $\mathcal{R}$. This is a subspace in $\mathbb{R}^{20}$ and   represents all KL Lagrangian objectives under the basis $\Ev$.

\textbf{2) Elementary transformations}. Consider all the elementary null expressions in Definition~\ref{def:equivalence}. 
We define the matrix $P \in \mathbb{R}^{20 \times 10}$ as
\begin{align*}
P=\left(\begin{array}{cc}
\begin{array}{cc}
\begin{array}{ccc}
1 & 1 & 0 \\
-1 & 0 & 0 \\
0 & -1 & 0 \\
0 & -1 & 0 \\ 
-1 & 0 & 1 
\end{array}  &  \\
 &
\begin{array}{cc}
1 & 1  \\
-1 & 0  \\
0 & -1  \\
0 & -1  \\ 
-1 & 0  
\end{array} \end{array}
& 0 \\
0 & 
\begin{array}{cc}
\begin{array}{cc}
1 & 1  \\
-1 & 0 \\
0 & -1 \\
0 & -1 \\ 
-1 & 0  
\end{array} &  \\
 &
\begin{array}{ccc}
1 & 1 & 0 \\
-1 & 0 & 0 \\
0 & -1 & 0  \\
0 & -1 & 1  \\ 
-1 & 0 & 0 
\end{array} \end{array}
\end{array} 
\right)
\end{align*}
For an objective $\mc{L} = \Ev R \nv$, and for each column $\cv$ of P, adding $\Ev R \nv + \Ev \cv$ corresponds to adding an elementary null expression. Because each elementary null expression is a constant with respect to the trainable parameters $\theta$, this means that $\Ev R \nv + \Ev \cv \equiv \Ev R \nv$. In addition, for any  vector $\bv \in \mathbb{R}^{10}$, $\Ev R \nv + \Ev P \bv \equiv \Ev R \nv$. This corresponds to adding multiple elementary null expressions. Denote the subspace spanned by the columns $P$ as $\mc{P}$. 





\textbf{3) Hardness Classes}. 
Expressions with different hardness are given by Definition~\ref{def:tractable_families}. As before, we find the corresponding matrix $T$ and subspace $\mathcal{T}$ for each ``hardness class''.  



For likelihood based computable objectives, the eight likelihood based computable terms in Definition~\ref{def:tractable_families} can be written as the columns of $T_{\mathrm{lb}} \in \mathbb{R}^{20 \times 8}$ under basis $\Ev$:
\begin{align*}
T^p_{\mathrm{lb}} = T^q_{\mathrm{lb}} = 
\left( \begin{array}{cccc}
1 & 0 & 0 & 0 \\
0 & 1 & 0 & 0 \\
0 & 0 & 0 & 0 \\
0 & 0 & 0 & 0 \\
0 & 0 & 1 & 0 \\
0 & 0 & 0 & 0 \\
0 & 0 & 0 & 0 \\
0 & 0 & 0 & 1 \\
0 & 0 & 0 & 0 \\
0 & 0 & 0 & 0 \\
\end{array} \right) 
\quad 
T_{\mathrm{lb}} = \left( \begin{array}{cc}
T_{\mathrm{lb}}^p & 0 \\ 
0 & T_{\mathrm{lb}}^q \\
\end{array} \right)
\end{align*}
denote the corresponding subspace as $\mathcal{T}_{\mathrm{lb}}$. 






For unary likelihood free computable objectives, aside from all likelihood based computable objectives, we also add the four divergences 
\[ \KL(\pt(\xv)||q(\xv)), \KL(q(\xv)||\pt(\xv)), \KL(p(\zv)||\qt(\zv)), \KL(\qt(\zv)||p(\zv)) \]
similarly we have 
\begin{align*}
T_{\mathrm{ulf}}^p = T_{\mathrm{ulf}}^q = 
\left( \begin{array}{cc} T_{\mathrm{lb}}^p & \begin{array}{cc}
0 & 0 \\
0 & 0 \\
0 & 0 \\
-1 & 0 \\
0 & -1 \\
0 & 0 \\
0 & 0 \\
0 & 0 \\
1 & 0 \\
0 & 1 
\end{array} \end{array} \right)
\quad
T_{\mathrm{ulf}} = \left( \begin{array}{cc}
T_{\mathrm{ulf}}^p & 0 \\ 
0 & T_{\mathrm{ulf}}^q \\
\end{array} \right)
\end{align*}
All unary likelihood free computable objectives are columns of $T_{\mathrm{ulf}} \in \mathbb{R}^{20 \times 12}$ under basis $\Ev$, and denote the corresponding subspace as $\mathcal{T}_{\mathrm{ulf}}$.

For binary likelihood free computable objectives, aside from all unary likelihood free computable objectives, we also add the two divergences
\[ \KL(\pt(\xv, \zv)||\qt(\xv, \zv)), \KL(\qt(\xv, \zv)||\pt(\xv, \zv)) \]
to get 
\begin{align*}
T_{\mathrm{blf}}^p = T_{\mathrm{blf}}^q = 
\left( \begin{array}{cc} T_{\mathrm{ulf}}^p & \begin{array}{cc}
1 \\
0 \\
0 \\
0 \\
0  \\
-1 \\
0 \\
0 \\
0 \\
0 
\end{array} \end{array} \right)
\quad
T_{\mathrm{blf}} = \left( \begin{array}{cc}
T_{\mathrm{blf}}^p & 0 \\ 
0 & T_{\mathrm{blf}}^q \\
\end{array} \right)
\end{align*}
All binary likelihood free computable objectives are columns of $T_{\mathrm{blf}} \in \mathbb{R}^{20 \times 14}$ under basis $\Ev$, denote the corresponding subspace as $\mathcal{T}_{\mathrm{blf}}$.



\textbf{4) Known Objective Families}. 
We find the subspace spanned by linear combination of the two likelihood based computable objectives $\beta$-VAE and Variational mutual information maximization under basis $\Ev$. This can be represented as the column space of 
\begin{align*}
\left( \begin{array}{cc} S_{\mathrm{VMI}} & S_{\beta-\mathrm{VAE}} \end{array} \right) = 
\left(\begin{array}{cc}
\begin{array}{c}
0 \\
0 \\
0 \\
0 \\
1 \\
0 \\
0 \\
-1 \\
0 \\
0 \\
\end{array} & 0 \\
0 & 
\begin{array}{cc}
0 & 0 \\
-1 & 0 \\
0 & 0 \\
0 & 0 \\
0 & -1 \\
0 & 0 \\
0 & -1 \\
0 & 0 \\
1 & 1 \\
0 & 1 \\
\end{array}
\end{array} \right)
\end{align*}
Denote the corresponding column space as $\mathcal{S}_{\mathrm{VMI}} + \mathcal{S}_{\beta-\mathrm{VAE}}$. Note that the matrix contains three columns because VMI has no free parameter so spans a one dimensional subspace, while $\beta$-VAE has a free parameter $\beta$, so spans a two dimensional subspace.


The subspace spanned by linear combinations of InfoGAN and InfoVAE is the column space of:
\begin{align*}
\left( \begin{array}{cc} S_{\mathrm{InfoGAN}} & S_{\mathrm{InfoVAE}} \end{array} \right) = 
\left(\begin{array}{cccc}
\begin{array}{cc}
0 & 0 \\
-1 & 0 \\
1 & 0 \\
1 & 1 \\
0 & 0 \\
0 & 0 \\
0 & 0 \\
-1 & 0 \\
0 & -1 \\
0 & 0 \\
\end{array} & 0 & \begin{array}{c}
0 \\ 
0 \\ 
0 \\ 
-1 \\ 
0 \\ 
0 \\ 
0 \\ 
0 \\ 
1 \\ 
0 \\ 
\end{array} & 0 \\
0 & \begin{array}{c} 
0 \\ 
0 \\ 
0 \\ 
0 \\ 
1 \\ 
0 \\ 
0 \\ 
0 \\ 
0 \\ 
-1 \\ 
\end{array} & 0 & \begin{array}{ccc}
0 & 0 & 0 \\
0 & -1 & 0 \\
0 & 0 & 0 \\
0 & 0 & 0 \\
0 & 0 & -1 \\
0 & 0 & 0 \\
0 & 1 & 0 \\
1 & 0 & 0 \\
0 & 0 & 0 \\
-1 & 0 & 1 \\
\end{array}
\end{array} \right)
\end{align*}
Denote the corresponding column space as $\mathcal{S}_{\mathrm{InfoGAN}} + \mathcal{S}_{\mathrm{InfoVAE}}$.

For subspace spanned by linear combinations of InfoBiGAN and InfoVAE is the column space of:
\begin{align*}
\left( \begin{array}{cc} S_{\mathrm{InfoBiGAN}} & S_{\mathrm{InfoVAE}} \end{array} \right) = 
\left(\begin{array}{cccc}
\begin{array}{ccc}
0 & 0 & 0 \\
1 & 0 & 0 \\
0 & 1 & 0 \\
-1 & 0 & 1 \\
0 & 0 & 0 \\
0 & 0 & 0 \\
0 & 0 & 0 \\
0 & -1 & 0 \\
0 & 0 & -1 \\
0 & 0 & 0 \\
\end{array} & 0 & \begin{array}{c}
0 \\ 
0 \\ 
0 \\ 
-1 \\ 
0 \\ 
0 \\ 
0 \\ 
0 \\ 
1 \\ 
0 \\ 
\end{array} & 0 \\
0 & \begin{array}{c} 
0 \\ 
0 \\ 
0 \\ 
0 \\ 
1 \\ 
0 \\ 
0 \\ 
0 \\ 
0 \\ 
-1 \\ 
\end{array} & 0 & \begin{array}{ccc}
0 & 0 & 0 \\
0 & -1 & 0 \\
0 & 0 & 0 \\
0 & 0 & 0 \\
0 & 0 & -1 \\
0 & 0 & 0 \\
0 & 1 & 0 \\
1 & 0 & 0 \\
0 & 0 & 0 \\
-1 & 0 & 1 \\
\end{array}
\end{array} \right)
\end{align*}
Denote the corresponding column space as $\mathcal{S}_{\mathrm{InfoBiGAN}} + \mathcal{S}_{\mathrm{InfoVAE}}$.




\textbf{5) Subspace Equivalence Relationship}. 

As before denote $\mathcal{A} + \mathcal{B}$ as the sum of two subspaces $\mathcal{A}$ and $\mathcal{B}$ (i.e. $\mathcal{A} + \mathcal{B}$ is the set of all vectors that are linear combinations of vectors in $\mathcal{A}$ and vectors in $\mathcal{B}$.) 
Consider first the $\mathcal{L}_{\beta-\mathrm{VAE}}$ objective and $\mathcal{L}_{\mathrm{VMI}}$ objective.
\begin{align*}
\mathcal{L}_{\mathrm{VMI}} &= -I_{\pt}(\xv; \zv) + \KL(\pt(\zv|\xv)\Vert \qt(\zv|\xv)) \numberthis \label{eq:vmi_form1} \\
&= \E_{\pt(\xv, \zv)}[\log \qt(\zv|\xv)] \numberthis \label{eq:vmi_form2}
\end{align*}
\begin{align*}
\mathcal{L}_{\beta-\mathrm{VAE}} &= (\beta - 1) I_{\qt}(\xv, \zv) + \KL(\qt(\xv|\zv)\Vert \pt(\xv|\zv)) + \beta \KL(\qt(\zv)\Vert p(\zv)) \numberthis \label{eq:elbo_form1} \\
&= - \E_{\qt(\xv, \zv)}[\log \pt(\xv|\zv)] + \beta \E_{\qt(\xv, \zv)}[\log \qt(\zv|\xv)] - \beta \E_{\qt(\zv)}[\log p(\zv)] \numberthis \label{eq:elbo_form2} 
\end{align*}
The $\mathcal{L}_{\beta-\mathrm{VAE}}$ objective and the $\mathcal{L}_{\mathrm{VMI}}$ objective are Lagrangian dual forms (Eq.(\ref{eq:vmi_form1})(\ref{eq:elbo_form1})), but are also elementary equivalent to an expression that only contains terms in $\mathcal{T}_{lb}$ in Eq.(\ref{eq:vmi_form2})\ref{eq:elbo_form2}). 
This means that for any $\sv \in \mathcal{S}_{\beta-\mathrm{VAE}}$ or $\sv \in \mathcal{S}_{VMI}$, there is an $\rv$ in the space of Lagrangian dual forms $\mathcal{R}$, a $\tv \in \mathcal{T}_{\mathrm{lb}}$ in the space of likelihood based computable expressions, and a $\pv$ in space of elementary null expressions $\mathcal{P}$ so that $\sv = \rv + \pv$ and $\sv = \tv + \pv$. This implies 
\begin{align*}
\mathcal{S}_{\beta-\mathrm{VAE}} + \mathcal{S}_{VMI} &\subset \mathcal{R} + \mathcal{P} \\
\mathcal{S}_{\beta-\mathrm{VAE}} + \mathcal{S}_{VMI} &\subset \mathcal{T}_{\mathrm{lb}} + \mathcal{P} 
\end{align*}
Therefore
\begin{align*}
\mathcal{S}_{\beta-\mathrm{VAE}} + \mathcal{S}_{VMI} \subset (\mathcal{R} + \mathcal{P}) \cap (\mathcal{T}_{\mathrm{lb}} + \mathcal{P})
\end{align*}

In addition we have the trivial relationship
\[ \mathcal{P} \subset (\mathcal{R} + \mathcal{P}) \cap (\mathcal{T}_{\mathrm{lb}} + \mathcal{P}) \]
so
\begin{align*}
\mathcal{S}_{\beta-\mathrm{VAE}} + \mathcal{S}_{\mathrm{VMI}} + \mathcal{P} \subset (\mathcal{R} + \mathcal{P}) \cap (\mathcal{T}_{\mathrm{lb}} + \mathcal{P}) \numberthis \label{equ:dim_match1}
\end{align*}
Therefore if we can further have
\[ \mathrm{dim}(\mathcal{S}_{\beta-\mathrm{VAE}} + \mathcal{S}_{\mathrm{VMI}} + \mathcal{P}) = \mathrm{dim} \left( (\mathcal{R} + \mathcal{P}) \cap (\mathcal{T}_{\mathrm{lb}} + \mathcal{P}) \right) \]
the above subspaces must be identical. This is because --- by contradiction --- that if the two spaces are not equivalent even when the condition in Eq.(\ref{equ:dim_match1}) holds, there exists vector $\vv \in (\mathcal{R} + \mathcal{P}) \cap (\mathcal{T}_{\mathrm{lb}} + \mathcal{P})$ but $\vv \not\in \mathcal{S}_{\beta-\mathrm{VAE}} + \mathcal{S}_{\mathrm{VMI}} + \mathcal{P}$. 
This implies that
\begin{align*}
\mathrm{dim} & \left( \mathcal{S}_{\beta-\mathrm{VAE}} + \mathcal{S}_{\mathrm{VMI}} + \mathcal{P} + \lbrace \vv \rbrace \right) >  
\mathrm{dim}\left(\mathcal{S}_{\beta-\mathrm{VAE}} + \mathcal{S}_{\mathrm{VMI}} + \mathcal{P}\right) = \mathrm{dim} \left( (\mathcal{R} + \mathcal{P}) \cap (\mathcal{T}_{\mathrm{lb}} + \mathcal{P}) \right)
\end{align*}
However $\mathcal{S}_{\beta-\mathrm{VAE}} + \mathcal{S}_{\mathrm{VMI}} + \mathcal{P} + \lbrace \vv \rbrace$ is a subspace of $(\mathcal{R} + \mathcal{P}) \cap (\mathcal{T}_{\mathrm{lb}} + \mathcal{P})$ but has greater dimension, leading to a contradiction. 

Similarly we have for the other families
\begin{align*}
\mathcal{S}_{\mathrm{InfoGAN}} + \mathcal{S}_{\mathrm{InfoVAE}} + \mathcal{P} &\subset (\mathcal{R} + \mathcal{P}) \cap (\mathcal{T}_{\mathrm{ulf}} + \mathcal{P}) \numberthis \label{equ:dim_match2} \\
\mathcal{S}_{\mathrm{InfoBiGAN}} + \mathcal{S}_{\mathrm{InfoVAE}} + \mathcal{P} &\subset (\mathcal{R} + \mathcal{P}) \cap (\mathcal{T}_{blf} + \mathcal{P}) \numberthis \label{equ:dim_match3}
\end{align*}

To compute the above subspace dimensions, we find a basis matrix whose columns spans the subspace. The rank of the basis matrix is equal to the dimensionality of its column space. 

We have written down the basis matrix for $\mc{P}$, $\mc{R}$, $\mc{T}_{\mathrm{lb}}$, $\mc{T}_{\mathrm{ulf}}$, $\mc{T}_{\mathrm{blf}}$, $\mathcal{S}_{\beta-\mathrm{VAE}}$, $\mathcal{S}_{\mathrm{VMI}}$, $\mathcal{S}_{\mathrm{InfoGAN}}$, $\mathcal{S}_{\mathrm{InfoVAE}}$, $\mathcal{S}_{\mathrm{InfoBiGAN}}$. We need to also find the basis matrix for the union and intersection of these subspaces, which we now derive. 

Let $\mathcal{A}, \mathcal{B}$ be the column space of matrices $A, B$, then the basis matrix of $\mathcal{A} + \mathcal{B}$ can be derived by $\left( \begin{array}{cc} A & B \end{array} \right)$. The basis matrix of $\mathcal{A} \cap \mathcal{B}$ can be derived by the following process: Compute the solution space for the linear equality 
\[ \left( \begin{array}{cc} A & B \end{array} \right) \left(\begin{array}{c} \uv \\ \vv \end{array} \right) = 0 \]
Let $U, V$ be basis matrix for the  solution space of $\uv, \vv$ respectively. Then $AU$ (or equivalently $-BV$) is a basis matrix for $\mathcal{A} \cap \mathcal{B}$~\citep{strang1993introduction}.

We can perform this procedure for all the above mentioned subspaces, and derive the following table. 


\begin{center}
\begin{tabular}{c|c}
subspace  & dimension \\
\hline 
$ (\mathcal{S}_{\beta-\mathrm{VAE}} + \mathcal{S}_{\mathrm{VMI}}) + \mathcal{P} $ & 13 \\
$  (\mathcal{T}_{lb} + \mathcal{P}) \cap (\mathcal{R} + \mathcal{P}) $ & 13 \\
\hline 
$ (\mathcal{S}_{\mathrm{InfoGAN}} \cup \mathcal{S}_{\mathrm{InfoVAE}}) + \mathcal{P} $ & 17 \\ 
$ (\mathcal{T}_{ulf} + \mathcal{P}) \cap (\mathcal{R} + \mathcal{P})$ & 17 \\ 
\hline 
$ (\mathcal{S}_{\mathrm{InfoBiGAN}} \cup \mathcal{S}_{\mathrm{InfoVAE}}) + \mathcal{P} $ & 18 \\ 
$ (\mathcal{T}_{blf} + \mathcal{P}) \cap (\mathcal{R} + \mathcal{P}) $ & 18
\end{tabular}
\end{center}

Therefore we have verified that dimensionality match for Eq. (\ref{equ:dim_match1}, \ref{equ:dim_match2}, \ref{equ:dim_match3}). Therefore we have
\begin{align*}
\mathcal{S}_{\beta-\mathrm{VAE}} + \mathcal{S}_{\mathrm{VMI}} + \mathcal{P} &= (\mathcal{R} + \mathcal{P}) \cap (\mathcal{T}_{\mathrm{lb}} + \mathcal{P}) \\
\mathcal{S}_{\mathrm{InfoGAN}} + \mathcal{S}_{\mathrm{InfoVAE}} + \mathcal{P} &= (\mathcal{R} + \mathcal{P}) \cap (\mathcal{T}_{\mathrm{ulf}} + \mathcal{P})  \\
\mathcal{S}_{\mathrm{InfoBiGAN}} + \mathcal{S}_{\mathrm{InfoVAE}} + \mathcal{P} &= (\mathcal{R} + \mathcal{P}) \cap (\mathcal{T}_{blf} + \mathcal{P}) 
\end{align*}

This implies that for any vector $\rv \in \mathcal{R}$ (which corresponds to objective $\Ev \rv$), if $\rv$ can be converted into likelihood based tractable form by elementary transformations: $\rv \in \mathcal{T}_{\mathrm{lb}}+\mathcal{P}$, then 
\[ \rv \in \mathcal{R} \cap (\mathcal{T}_{\mathrm{lb}}+\mathcal{P}) \subset  (\mathcal{R} + \mathcal{P}) \cap (\mathcal{T}_{\mathrm{lb}} + \mathcal{P}) \]
This means 
\[ \rv \in \mathcal{S}_{\beta-\mathrm{VAE}} + \mathcal{S}_{\mathrm{VMI}} + \mathcal{P} \]
which implies that $\rv$ can be converted by elementary transformation into a linear combination of vectors in $\mathcal{S}_{\beta-\mathrm{VAE}}$ and vectors in $\mathcal{S}_{VMI}$. We can derive identical conclusion for the other two hardness classes.  
\end{proof}

Finally we argue that the closure result holds for other divergences. Because all the elementary null expressions defined in Definition~\ref{def:equivalence} involve adding or subtracting terms of the form $E_*[\log \cdot]$ where $*$ is $p_\theta(\xv, \zv)$ or $q_\theta(\xv, \zv)$ and $\cdot$ is any joint, marginal or conditional distribution of $p_\theta$ or $q_\theta$. Note that out of the divergences we discussed in Section 2: Wasserstein distance, f-divergence, MMD distance, JS divergence, no divergence other than the KL divergence/reverse KL divergence can add a elementary null expression, and transform into a difference expression in Definition~\ref{def:tractable_families}. Therefore such divergences must be respectively optimized and there can be no computational shortcut by elementary transformations. 

\section{Proofs for Section 5}
\textbf{Lemma \ref{lemma:convex_constraint}}[Convex Constraints (Formal)]
For any $D$ that is $\KL$ $\MMD$, or $D_f$, the following expressions are convex with respect to $\theta \in \Theta$ as defined in Eq.(\ref{eq:theta_set}):
\begin{align*}
&D(p_\theta(\xv, \zv) \Vert q_\theta(\xv, \zv))  & D(q_\theta(\xv, \zv) \Vert p_\theta(\xv, \zv)) \\
&D(p_\theta(\xv) \Vert q(\xv)) & D(q(\xv) \Vert p_\theta(\xv))  \\
&D(p(\zv) \Vert q_\theta(\zv)) & D(q_\theta(\zv) \Vert p(\zv)) 
\end{align*}

\begin{proof}[Proof of Lemma~\ref{lemma:convex_constraint}]
We first prove convexity for $\KL(p(\zv) \Vert q_\theta(\zv))$. For any probability measures on $\mathcal{Z}$ (or any discrete sample space), $\KL$ is jointly convex with respect to both arguments~\citep{liese2006divergences}: for any probability measures $p_1, p_2, q_1, q_2$ on $\mc{Z}$ we have
\[ \KL(\lambda p_1 + (1 - \lambda) p_2 \Vert \lambda q_1 + (1 - \lambda) q_2) \leq \lambda \KL(p_1 \Vert q_1) + (1 - \lambda) \KL(p_2 \Vert q_2),  \]
Therefore fixing $p_1=p_2=p$ we have 
\[ \KL(p \Vert \lambda q_1 + (1 - \lambda) q_2) \leq \lambda \KL(p \Vert q_1) + (1 - \lambda) \KL(p \Vert q_2)  \]
which implies that $\KL(p(\zv) \Vert q_\theta(\zv))$ is convex with respect to $q_\theta(\zv)$ in probability measure space (i.e. $q_\theta(\zv)$ is any vector of probabilities $[q_\theta(\zv=i), i \in \mathcal{Z}]$). 
In addition $q_\theta(\zv=i) = \sum_j q(\xv=j) \theta^q_{ij}$ is linear in $\theta^q$. This means that $\KL(p(\zv) \Vert q_\theta(\zv))$  is a convex function composed with a linear function of $\theta^q$, so must be convex with respect to $\theta^q$, and additionally must be convex with respect to $\theta$. 

For $\KL(p_\theta(\xv, \zv) \Vert q_\theta(\xv, \zv))$, as before $\KL$ is jointly convex with respect to $p_\theta(\xv, \zv)$ and $q_\theta(\xv, \zv)$ in probability measure space ($p_\theta(\xv, \zv)$, $q_\theta(\xv, \zv)$ are any vectors of probabilities $[p_\theta(\xv=i, \zv=j), q_\theta(\xv=k, \zv=l), i, k \in \mathcal{X}, j, l \in \mathcal{Z}]$). In addition, $p(\xv=i, \zv=j) = p(\zv=j) \theta^p_{ij}$ and $q(\xv=i, \zv=j) = q(\xv=i) \theta^q_{ij}$; both are linear in $\theta$. Therefore $\KL(p_\theta(\xv, \zv) \Vert q_\theta(\xv, \zv))$ is a convex function composed with a linear function of $\theta$, so must be convex with respect to $\theta$. 

The proof similarly generalizes to other cases. In addition, the proofs generalizes to different divergences $\KL$, $\MMD$, $\D_f$ because they are all jointly convex with respect to both arguments. This covers all scenarios in the lemma and completes the proof. 
\end{proof}

\begin{proof}[Proof of Lemma~\ref{lemma:convex_bound}]
Upper bound: Because $\overline{I}_{q_\theta} = \bb{E}_{q(\xv)}[\KL(q_\theta(\zv|\xv)\Vert p(\zv))] $, and $\KL(q_\theta(\zv|\xv)\Vert p(\zv))$ is convex in the first argument, and expectation of a convex function is a convex function. Therefore $\overline{I}_{q_\theta}$ is convex with respect to $\theta$. 

Lower bound: Because
\[ \underline{I}_{q_\theta} \equiv \Eb_{q_\theta(\xv, \zv)}[\log p_\theta(\xv|\zv)] \] is linear in $q_\theta(\xv, \zv)$, and $q_\theta(\xv=i, \zv=j) = q(\xv=i) \theta_{ij} $ is linear in $\theta_{ij}$. Therefore $\underline{I}_{q_\theta}$ is linear (concave) in $\theta$. 
\end{proof}

\begin{proof}[Proof for Lemma~\ref{lemma:non_empty_interior}]
Because $\Theta$ include all distributions on $(\mc{X}, \mc{Z})$, there exists a $\theta$ such that $p_\theta(\xv, \zv) = q_\theta(\xv, \zv)$. This can be achieved by setting $\forall \zv \in \mc{Z}, p_\theta(\xv|\zv) = q(\xv)$, and $\forall \xv \in \mc{X}, q_\theta(\zv|\xv) = p(\zv)$. Therefore there exists a solution that strictly satisfies the constraints $\mathcal{D} = 0 < \epsilonv$. 
\end{proof}

\section{Explicit Lagrangian VAE Objectives}
\label{sec:lave-dual}
For mutual information minimization / maximization, we can write the corresponding Lagrangian dual forms $\overline{h}$ and $\underline{h}$ for upper bounds and lower bounds. For $\alpha_1 < 0$ we consider
\begin{align}
& \overline{h}(\theta, \lambda; \alpha_1, \epsilonv) =  -\alpha_1   \overline{I}_q(\xv; \zv) + \lambdav^\top \mc{D} \nonumber \\
\equiv  & \ -\lambda_1 \epsilon_1^{-1} \bb{E}_{\qt(\xv, \zv)}[\log \pt(\xv|\zv)] - \lambda_1 - \lambda_2 \nonumber \\
&  \quad + (\lambda_1 \epsilon_1^{-1} - \alpha_1) \ \bb{E}_{q(\xv)}[\KL(\qt(\zv|\xv) \Vert p(\zv))] \nonumber \\
&  \quad + \lambda_2 \epsilon_2^{-1} \ \MMD(\qt(\zv) \Vert p(\zv)) \nonumber
\end{align}
For $\alpha_1 > 0$, we consider
\begin{align}
& \underline{h}(\theta, \lambda; \alpha_1, \epsilonv) = -\alpha_1  \underline{I}_q(\xv; \zv) + \lambdav^\top \mc{D} \nonumber \\
\equiv  & \ -(\lambda_1 \epsilon_1^{-1} + \alpha_1) \bb{E}_{\qt(\xv, \zv)}[\log \pt(\xv|\zv)] - \lambda_1 - \lambda_2 \nonumber \\
&  \ + \lambda_1 \epsilon_1^{-1} \ \bb{E}_{q(\xv)}[\KL(\qt(\zv|\xv) \Vert p(\zv))] \nonumber \\
&  \ + \lambda_2 \epsilon_2^{-1} \ \MMD(\qt(\zv) \Vert p(\zv))   \nonumber
\end{align}

In both objectives, the coefficient for likelihood is negative while the coefficients for the constraints are non-negative for all $\lambda_1, \lambda_2 \geq 0$; this ensures that we will not maximize divergence or minimize log-likelihood at any time, causing instability in optimization. 

\section{Experiment Details for Section~\ref{sec:exp-pareto}}
We consider both decoder and encoder networks to be 2 fully connected layers of 1024 neurons with softplus activations. For each set of hyperparameters, we can then obtain its mutual information and consistency values. 
We use the Adam optimizer~\cite{adam_optimization2014} with $\beta_1 = 0.5$. We empirically observe that setting $\beta_1 = 0.5$ to improve stability in adversarial training over $\theta$ and $\lambdav$, which has been suggested in literature about adversarial training~\citep{deconvolutional_gan2015}.

\end{document}